%% file: main.tex
\documentclass{article}

\usepackage{microtype}
\usepackage{graphicx}
\usepackage{subcaption}
\usepackage{booktabs} 

\usepackage{hyperref}

\usepackage[preprint]{icml2026}

\usepackage{amsmath}
\usepackage{amssymb}
\usepackage{mathtools}
\usepackage{amsthm}

\usepackage[capitalize,noabbrev]{cleveref}

\theoremstyle{plain}
\newtheorem{theorem}{Theorem}[section]

\newtheorem{lemma}[theorem]{Lemma}
\newtheorem{corollary}[theorem]{Corollary}
\theoremstyle{definition}
\newtheorem{definition}[theorem]{Definition}
\newtheorem{assumption}[theorem]{Assumption}
\theoremstyle{remark}
\newtheorem{remark}[theorem]{Remark}

\usepackage[textsize=tiny]{todonotes}

\icmltitlerunning{Multi-Objective Multi-Fidelity Bayesian Optimization with Causal Priors}

\input{preamble}

\input{math_notations}
\input{macros}

\begin{document}

\twocolumn[
  \icmltitle{Multi-Objective Multi-Fidelity Bayesian Optimization with Causal Priors}




  \begin{icmlauthorlist}
    \icmlauthor{Md Abir Hossen}{uofsc}
    \icmlauthor{Mohammad Ali Javidian}{asu}
    \icmlauthor{Vignesh Narayanan}{uofsc}
    \icmlauthor{Jason M. O'Kane}{tanm}
    \icmlauthor{Pooyan Jamshidi}{uofsc}
  \end{icmlauthorlist}

  \icmlaffiliation{uofsc}{Department of Computer Science and Engineering, University of South Carolina, SC, USA}
  \icmlaffiliation{asu}{Computer Science Department, Appalachian State University, NC, USA}
  \icmlaffiliation{tanm}{Department of Computer Science and Engineering, Texas A\&M University, TX, USA}
  \icmlcorrespondingauthor{Md Abir Hossen}{mhossen@email.sc.edu}

  \icmlkeywords{Machine Learning, ICML}

  \vskip 0.3in
]



\printAffiliationsAndNotice{}  

\input{Text/abstract}
\input{Text/0_Introduction}
\input{Text/5_Related_work}
\input{Text/1_Problem_description}
\input{Text/1.1_Background}
\input{Text/2_Methodology}
\input{Text/3.1_Theory}
\input{Text/3_Experiments_and_Results}
\input{Text/4_Discussion}
\input{Text/6_Conclusions}

\input{Text/acknowledgements}
\input{Text/impact_statement}

\nocite{langley00}

\bibliography{reference}
\bibliographystyle{icml2026}

\newpage
\appendix
\onecolumn
\input{Text/7_Appendix}


\end{document}

%% file: preamble.tex
\usepackage{xspace}
\usepackage{nicefrac}

\usepackage{wrapfig}
\usepackage{xcolor}

\usepackage{tabularray}
\usepackage{subcaption}

\DeclareMathOperator*{\argmax}{arg\,max}
\DeclareMathOperator*{\argmin}{arg\,min}

%% file: math_notations.tex
\DeclareFontFamily{OT1}{pzc}{}
\DeclareFontShape{OT1}{pzc}{m}{it}{<-> s * [1.2] pzcmi7t}{}
\DeclareMathAlphabet{\mathpzc}{OT1}{pzc}{m}{it}

\def\hat{\widehat}

\newcommand{\fidelityspace}{\mathcal{S}}
\newcommand{\fidelity}{s}
\newcommand{\targetfidelity}{s_{\diamond}}
\newcommand{\objfunc}{\boldsymbol{f}}

\newcommand{\configfidelity}{\boldsymbol{x}}

\newcommand{\configvec}{\mathbf{x}}

\newcommand{\objvalues}{\mathbf{y}}
\newcommand{\optionsvar}{\mathbf{X}}
\newcommand{\optionsvalue}{\mathbf{x}}
\newcommand{\objvar}{\mathbf{Y}}
\newcommand{\nonmanipulable}{\mathbf{Z}}

\newcommand{\data}{\mathcal{D}}
\newcommand{\interdata}{\data}
\newcommand{\obsdata}{\widehat{\data}}

\newcommand{\kernel}{k}
\newcommand{\meanvec}{\boldsymbol{\mu}}
\newcommand{\sigmavec}{\boldsymbol{\sigma}}
\newcommand{\indexkernel}{\kernel_{\operatorname{obj}}}
\newcommand{\crosscovepost}{\mathbf{k}}
\newcommand{\covmatpost}{\mathbf{K}}
\newcommand{\covmat}{\boldsymbol{\Sigma}}

\newcommand{\cov}[2]{\operatorname{Cov} (#1,#2)}

\newcommand{\DO}[2]{\operatorname{do} \! \left(#1 = #2\right)} 
\newcommand{\doo}[2]{\operatorname{do} \left(#1 = #2\right)} 

\newcommand{\hv}{\operatorname{HV}}
\newcommand{\pareto}{\mathcal{P}}
\newcommand{\refpoint}{\boldsymbol{r}}

\newcommand{\budget}{\Lambda}
\newcommand{\cost}{c}

\newcommand{\acqfunc}{\mathcal{A}}
\newcommand{\ci}{\operatorname{CI}}
\newcommand{\gp}{\operatorname{GP}}

\newcommand{\cg}{\mathcal{G}}

\newcommand{\vertex}{\mathbf{V}}
\newcommand{\cm}{\mathcal{C}}
\newcommand{\interdist}{\mathbb{E}}

\newcommand{\ckdo}{\boldsymbol{f}_{\operatorname{do}}}

\newcommand{\edges}{\mathbf{E}}
\newcommand{\pa}[1]{\text{Pa}\!\left(#1\right)}

\newcommand{\gpmodel}{\mathcal{M}}
\newcommand{\nsga}{\operatorname{NSGAII}}

\newcommand{\lipschitz}{\mathcal{L}}

\newcommand{\errCPM}{\xi}

%% file: macros.tex
\newcommand{\eg}{e.g.,\xspace}

\newcommand{\eq}{Eq.\xspace}

\newcommand{\our}{RESCUE\xspace}

%% file: Text/abstract.tex
\begin{abstract}
Multi-fidelity Bayesian optimization~(MFBO) accelerates the search for the global optimum of black-box functions by integrating inexpensive, low-fidelity approximations. The central task of an MFBO policy is to balance the cost-efficiency of low-fidelity proxies against their reduced accuracy to ensure effective progression toward the high-fidelity optimum. Existing MFBO methods primarily capture associational dependencies between inputs, fidelities, and objectives, rather than causal mechanisms, and can perform poorly when lower-fidelity proxies are poorly aligned with the target fidelity. We propose \our~(\underline{RE}ducing \underline{S}ampling cost with \underline{C}ausal \underline{U}nderstanding and \underline{E}stimation), a multi-objective MFBO method that incorporates causal calculus to systematically address this challenge. \our learns a structural causal model capturing causal relationships between inputs, fidelities, and objectives, and uses it to construct a probabilistic multi-fidelity~(MF) surrogate that encodes intervention effects. Exploiting the causal structure, we introduce a causal hypervolume knowledge-gradient acquisition strategy to select input–fidelity pairs that balance expected multi-objective improvement and cost. We show that \our improves sample efficiency over state-of-the-art MF optimization methods on synthetic and real-world problems in robotics, machine learning~(AutoML), and healthcare.
\end{abstract}

%% file: Text/0_Introduction.tex
\section{Introduction}
Optimizing black-box functions that lack analytical forms and derivative information is a ubiquitous problem, especially when they are expensive to evaluate. Multi-Objective Bayesian Optimization~(MOBO) addresses this problem in a sample-efficient manner~\cite{hernandez2016predictive} by iteratively maximizing an acquisition function computed from a probabilistic surrogate model~(typically a Gaussian Process~(GP)) to select candidate inputs for intervention. The selected inputs are evaluated on the true objective functions, and the surrogate is updated to improve objective estimation. However, executing interventions on the true objective functions is costly, as each intervention can involve extensive computation or expensive real-world experiments. In practice, multiple Information Sources~(IS) with varying numerical resolutions are often available to approximate the objective functions. Interventions at low fidelities typically incur lower cost but provide less accurate approximations than interventions at higher fidelities. For example, tuning parameters of a mobile robot can be costly in real-world scenarios~(\eg longer evaluation time, safety concerns, hardware degradation, operational constraints) but can be accelerated using lower-fidelity simulations such as \textit{Gazebo}~\cite{hossen2025cure}. A key challenge is optimizing the trade-off between the cost and accuracy of MF interventions to enable informed decision-making. This challenge intensifies when auxiliary IS poorly approximate the target fidelity. In such cases, single-fidelity BO~(SFBO)---i.e., BO using only the primary IS---can outperform MF methods under a fixed budget~\cite{rMFBO_AISTATS2023}.

In this paper, we present \our, a multi-objective multi-fidelity optimization method that integrates causal inference within the MOBO framework. \our builds upon the literature on constrained causal BO~\cite{cCBO_ICML2023}, causal knowledge transfer for BO~\cite{hossen2025cure}, and MF optimization~\cite{rMFBO_AISTATS2023, MISO_Neurips2017}. Specifically, \our captures both the causal relationships among inputs and the causality between inputs and outputs by learning a causal model from observational data. We use this causal model to construct an MF multi-output GP surrogate with a causal prior, henceforth the MF Causal GP~(MF-CGP). The MF-CGP estimates the objectives while also capturing cross-fidelity correlations. \our strategically performs causal inference and leverages the MF-CGP to select an input-fidelity pair by maximizing the MF Causal Hypervolume Knowledge Gradient~(C-HVKG) acquisition function, which is an extension of HVKG~\cite{HVKG_ICML2023}. The causal model and the MF-CGP are actively updated using new observations to refine the causal structure and improve output estimation. 

Our contributions are as follows:
\begin{itemize}
    \item We develop a multi-objective multi-fidelity optimization method that integrates causal calculus into MF-MOBO framework to estimate intervention effects.

    \item We propose the multi-fidelity Causal Hypervolume Knowledge Gradient~(C-HVKG), an acquisition strategy that exploits the causal structure to drive exploration by selecting an optimal input-fidelity pair while limiting infeasible interventions.   
    
    \item Theoretically, we show \our's performance can be bounded to its single-fidelity MOBO counterpart.

    \item We demonstrate substantial gains in optimization performance of \our over state-of-the-art MF-MOBO and non-MF MOBO on a variety of synthetic and real-world MF problems in robotics, ML, and healthcare.
\end{itemize}

%% file: Text/5_Related_work.tex
\section{Related Works}
Integrating multiple IS into BO to reduce optimization cost has been studied via hierarchical MFBO methods when sources can be ranked by fidelity~\cite{MFBODNN_Neurips2020, takeno2020multi, moss2021gibbon, tighineanu2022transfer} and via non-hierarchical methods otherwise~\cite{lam2015multifidelity, MISO_Neurips2017, wu2023disentangled}. Although hierarchical methods may suffer from error propagation from lower-fidelities~\cite{wu2023disentangled}, we do not claim that non-hierarchical methods are inherently superior or inferior; rather, the choice is application specific~\cite{zhou2023multi}. Our MF-CGP jointly learns cross-fidelity correlations, while the causal model lets experts encode domain knowledge.

Multi-fidelity extensions of BO have been widely studied for multi-objective~\cite{MOMF_2023, HVKG_ICML2023, MFOSEMO_AAAI2020, BiFidelity_2022} and single-objective~\cite{rMFBO_AISTATS2023, MFBODNN_Neurips2020, takeno2020multi, MISO_Neurips2017} optimization. We refer the interested reader to~\citet[Section~3]{HVKG_ICML2023} and~\citet[Section~3]{MFOSEMO_AAAI2020} for details. When IS poorly approximate the target fidelity, MF optimization can incur a higher cumulative evaluation cost than SFBO. rMFBO~\cite{rMFBO_AISTATS2023} mitigates this with a theoretical guarantee that, even with unreliable sources, its performance is bounded by SFBO, but it applies only to the single-objective setting. Despite their success, none of these methods captures the causal structure existing among inputs, fidelities, and objectives. These methods rely on ML models and sampling heuristics to predict objectives across fidelities, which may capture spurious correlations when distribution shifts occur across fidelities~ \cite{iqbal2023cameo}. 

Although there is extensive literature on MFBO and causal BO~\cite{cbo_Neurips2020, cCBO_ICML2023}, the literature on integrating causal inference in MF setting is limited. Recent works have focused on learning a causal model from the source~(\eg a robot in simulation), transferring it to the target~(\eg a real robot), and using it to identify causally relevant inputs to reduce the MOBO search space~\cite{hossen2025cure}. However, direct transfer of the causal model can induce bias. While our contributions build upon previous work on causal BO and causal knowledge transfer for accelerating optimization, none of these works consider the MF setting.

%% file: Text/1_Problem_description.tex
\section{Preliminaries}

Random variables and observations denoted by upper- and lower-case letters, respectively, and vectors appear in bold.

\subsection{Problem Setup}

Consider a highly configurable system with $X_i$ indicating the $i^{\operatorname{th}}$ configuration option, which takes values $\mathrm{x}_i \in \operatorname{Dom}(X_i)$ from a finite domain. The configuration space is the Cartesian product of all parameter domains, $\mathcal{X} = \times_{i \in [d]} X_i$, where $d$ is the number of configuration options. A configuration $\configvec \in \mathcal{X}$ is instantiated by setting a specific value for each option within its domain, $\configvec=\langle X_1=\mathrm{x}_1, X_2=\mathrm{x}_2, \dots, X_d=\mathrm{x}_d \rangle$. Let $\fidelityspace$ denote the fidelity space, and let $\targetfidelity \in \fidelityspace$ denote the target fidelity~(e.g., a real system). For any fidelity $\fidelity \in \fidelityspace$, we represent the performance objective as a black-box mapping from the joint configuration-fidelity space to $M$ real-valued objectives, $\objfunc:\mathcal{X} \times \fidelityspace \rightarrow \mathbb{R}^{M}$, where $\objfunc = \{ f_m \}^{M \geq 2}_{m=1}$. In practice, we learn $\objfunc$ by sampling from the configuration-fidelity space and collecting observations $\objvalues = \objfunc(\configvec, \fidelity)+\boldsymbol{\epsilon}$, where $\boldsymbol{\epsilon} \sim \mathcal{N}(0, \sigma^2)$. Evaluations at fidelities $\fidelity \neq \targetfidelity$ are regarded as auxiliary sources that approximate the target~($\targetfidelity$) with variable bias $\boldsymbol{\delta}( \configvec,\fidelity ) = \objfunc(\configvec,\fidelity) - \objfunc(\configvec,\targetfidelity)$.  We consider $\objfunc(\configvec,\targetfidelity)$ subject to $Q$ inequality constraints $\{h_q(\configvec,\fidelity)\}_{q=1}^Q$, which are observable across all fidelities. Let $\cost(\configvec, \fidelity)$ be the evaluation cost, and assume $\cost(\configvec, \fidelity) < \cost(\configvec, \targetfidelity)$ for $\fidelity \neq \targetfidelity$, where $\fidelity$ could correspond to simulation tools or simplified settings. Furthermore, $\objfunc(\configvec,\targetfidelity)$ can be observed directly without bias, although noise may influence it. Our goal is to find a configuration $\configvec^*$ \emph{at the target fidelity} $\targetfidelity$ that yields Pareto-optimal performance while satisfying the constraints under a budget $\budget \leq C$, where $C \coloneqq \sum_{t = 1}^{T} \cost(\configvec_t,\fidelity_t)$ with $T$ denoting total interventions. Formally, we define\footnote{Depending on application, \eq~\eqref{eq:problem} may be adjusted when objectives are maximized and/or some or all constraints must be below the threshold.}:
\begin{equation} 
\label{eq:problem}
    \configvec^* 
    = \argmin_{\configvec \in \mathcal{X}} \objfunc(\configvec,\targetfidelity)
    \ \text{s.t.}\ 
    h_q(\configvec,\targetfidelity) \geq \gamma_q,\ \forall q \in [Q],
\end{equation}
where $\gamma_q$ is the constraint threshold.

%% file: Text/1.1_Background.tex
\subsection{Background}

Let $\cg=(\vertex, \edges)$ denote a directed acyclic graph~(DAG), where $\vertex$ is the set of observable variables and $\edges$ is the set of directed edges capturing causal dependencies. A causal performance model~(CPM) is a specialized structural causal model~\cite{pearl2009causality, peters2017elements} associated with $\cg$ to formally model causal dependencies. The CPM is defined as $\cm=\langle \mathbf{U}, \vertex, \mathbf{F} \rangle$, where $\mathbf{U}$ is a set of mutually independent exogenous variables determined by factors outside the model, and $\mathbf{F}$ is a set of deterministic functions such that $V_i = f_i(\pa{V_i}, U_i)$ for each $V_i \in \vertex$, with $\pa{V_i} \subseteq \vertex \setminus \{V_i\}$ denoting the set of parents of $V_i$ and $U_i \in \mathbf{U}$. We draw a directed edge from $V_i$ to $V_j$ if $V_i \in \pa{V_j}$. We assume \emph{causal sufficiency}, so there are no unmeasured common causes of pairs of observed variables. We partition $\vertex$ into three disjoint subsets: configuration options $\optionsvar$, key performance indicators $\nonmanipulable$ (non-manipulable variables), and performance objectives $\objvar$. The fidelity $\fidelity \in \fidelityspace$ is also an observed variable within $\vertex$, influencing the key performance indicators and performance objectives. While $\cg$ captures the abstract causal structure, the CPM augments $\cg$ with functional and probabilistic components, enabling causal reasoning and interventional analysis. We assume the system can be described by an CPM.

Using the rules of \textit{do}-calculus~\cite{pearl2009causality}, a causal intervention~(CI)~\cite{peters2017elements} modifies $\cm$ by replacing $f_{\optionsvar}(\pa{\optionsvar}, U_{\optionsvar})$ with $\optionsvalue$, denoted by $\DO{\optionsvar}{\optionsvalue}$ which represents an intervention on $\optionsvar$ by setting its value to $\optionsvalue$. Exploiting the rules of \textit{do}-calculus, we can compute the interventional distribution $p(\objvar | \DO{\optionsvar}{\optionsvalue}, \fidelity)$ at fidelity $\fidelity$. The expected value of $\objvar$ under this interventional distribution is given by a causal function $\interdist_{p(\objvar | \doo{\optionsvar}{\optionsvalue}, \fidelity)}[\objvar]$. Given $\cg$, we can write the observational distribution as $p(\objvar|\optionsvar=\optionsvalue, \fidelity)$ using the Causal Markov Condition~\cite{pearl2009causality}. Evaluating $p(\objvar|\optionsvar=\optionsvalue, \fidelity)$ requires \emph{observing} while $p(\objvar | \DO{\optionsvar}{\optionsvalue}, \fidelity)$ requires \emph{interventions} at fidelity $\fidelity$. Computing $p(\objvar | \DO{\optionsvar}{\optionsvalue}, \fidelity)$ often involves evaluating intractable integrals which can be approximated using observational data\footnote{We assume access to either observational data $\obsdata$~(e.g., system logs, which are readily available in most systems) or a given DAG.} $\obsdata$ to get a Monte Carlo~(MC) estimate $\hat{p}(\objvar | \DO{\optionsvar}{\optionsvalue}, \fidelity) \approx p(\objvar | \DO{\optionsvar}{\optionsvalue}, \fidelity)$ under the appropriate identifiability conditions~\cite{galles2013testing}. For notational simplicity, we denote $\ckdo(\configvec, \fidelity)=\interdist_{\hat{p}(\objvar | \doo{\optionsvar}{\optionsvalue}, \fidelity)}[\objvar]$.

In this paper, we learn $\cg$ from $\obsdata$ using existing causal discovery methods---the PC algorithm~\cite{spirtes2001causation} and DirectLiNGAM~\cite{shimizu2011directlingam}---and integrate causal discovery into our framework. We enforce domain-specific structural constraints, such as disallowing directed edges from $\objvar$ to $\optionsvar$, to facilitate learning $\cg$ with limited data; see \citet[Section~3]{hossen_care2023} for details.

%% file: Text/2_Methodology.tex
\section{Methodology}
In this section, we introduce a probabilistic surrogate model by exploiting causal calculus. We then design an acquisition strategy that uses causal information to balance the trade-off between cost and information in multi-fidelity query selection. Finally, we present the \our algorithm to find optimal configurations under a finite evaluation budget.

\subsection{Multi-output Multi-fidelity Causal GP}

\paragraph{Likelihood}
Let $\interdata = \{(\configvec_i, \fidelity_i, \objvalues_i)\}_{i=1}^N$ be a finite collection of $N$ interventional samples. We assume that each $\objvalues_i$ is a noisy observation of $\objfunc(\configvec_i,\fidelity_i)$, i.e., $\objvalues_i = \objfunc(\configvec_i,\fidelity_i) + \boldsymbol{\epsilon}_i$, where $\boldsymbol{\epsilon}_i \sim \mathcal{N}(0, \sigma^2 \mathbf{I})$ and $\{\boldsymbol{\epsilon}_i\}_{i=1}^N$ are independent\footnote{To avoid clutter, $\boldsymbol{\epsilon}_i$ is omitted in the remainder of the paper.}. The feasibility indicators $\boldsymbol{g}(\configvec, \fidelity) = \{\mathbb{I}(h_q(\configvec, \fidelity) \geq \gamma_q)\}_{q=1}^Q$ are observed at each $(\configvec_i, \fidelity_i)$ and modeled as additional outputs using the same probabilistic model described below. For notational simplicity, we omit $\boldsymbol{g}$ from $\interdata$ and the subsequent formulation. The likelihood of $\interdata$ is
$p(\interdata \mid \objfunc, \sigma^2) = \prod_{i=1}^N \mathcal{N}\bigl(\objvalues_i \,;\, \objfunc(\configvec_i,\fidelity_i), \sigma^2 \mathbf{I}\bigr)$.

\paragraph{Probabilistic model}
We place a causal prior~\cite{cbo_Neurips2020} on the objective function $\objfunc(\configvec, \fidelity)$. The causal prior is computed to estimate the causal effect of a configuration option on the objective. We construct the prior mean and variance of a GP using the observational data $\obsdata$ and the corresponding CPM $\cm$. For notational brevity\footnote{In this paper, $(\configvec, \fidelity)$ and $\configfidelity$ are used interchangeably.}, we define $\configfidelity := (\configvec,\fidelity)$ and $\configfidelity' := (\configvec',\fidelity')$. The MF-CGP is defined as
\begin{equation*}
    \begin{aligned}
        \objfunc(\configfidelity) 
        &\sim 
        \operatorname{GP}(
            \meanvec(\configfidelity), 
            \kernel_{m,m'}(\configfidelity, \configfidelity')
        ),
        \label{eq:gpmodel} 
        \\
        \meanvec(\configfidelity) 
        &= \ckdo(\configfidelity), 
        \\
        \kernel_{m,m'}\left(\configfidelity, \configfidelity'\right) 
        &= 
        \left(
            \kernel_{\operatorname{in}}(\configvec, \configvec')
            \kernel_{\operatorname{fid}}(\fidelity, \fidelity') + \hat{\sigmavec}(\configfidelity) \hat{\sigmavec}(\configfidelity'))
        \right) 
        \\ 
        &\quad \quad \otimes \indexkernel(m, m'),
    \end{aligned}
\end{equation*}
where $\hat{\boldsymbol{\sigma}}(\configfidelity)
= (\mathbb{V}_{\hat{p}(\objvar \mid \doo{\optionsvar}{\optionsvalue}, \fidelity)}[\objvar])^{\nicefrac{1}{2}}$ 
with $\mathbb{V}$ representing variance estimated from $\obsdata$. The kernel $\kernel_{m,m'}(\configfidelity, \configfidelity') = \cov{(\configfidelity, m)}{(\configfidelity', m')}$ captures spatial similarity and inter-objective relationships. Here, $\kernel_{\operatorname{in}}(\cdot,\cdot)$ captures covariance between configurations, $\kernel_{\operatorname{fid}}(\cdot,\cdot)$ captures similarity across fidelities, and $\indexkernel(\cdot,\cdot)$ captures inter-objective covariance. For both $\kernel_{in}(\cdot,\cdot)$ and $\kernel_{\operatorname{fid}}(\cdot,\cdot)$ we use the radial basis function~(RBF) kernel defined as $\exp(- \frac{|| \configfidelity - \configfidelity'||^2}{2l^2})$.

Given this GP prior, the posterior distribution over $\objfunc(\configfidelity)$ can be derived analytically via standard GP updates. For notational brevity, we write $\configfidelity_i := (\configvec_{:,i},\fidelity_i)$ and $\configfidelity_{1:N} := (\configfidelity_1,\dots,\configfidelity_N)$. 
The posterior distribution is given as, $p(\objfunc(\configfidelity) | \interdata)=\mathcal{N}(
        \meanvec(\configfidelity | \interdata),\,
        \covmatpost(\configfidelity,\configfidelity' | \interdata)
)$, where $\meanvec(\configfidelity | \interdata)=\meanvec(\configfidelity)
    + \crosscovepost(\configfidelity, \configfidelity_{1:N})^\top
      (\covmat + \sigma^2 \boldsymbol{I})^{-1}(\objvalues-\meanvec(\configfidelity_{1:N}))$ and 
      $\covmatpost(\configfidelity,\configfidelity' | \interdata)= \kernel(\configfidelity,\configfidelity')
    - \crosscovepost(\configfidelity, \configfidelity_{1:N})^\top
      (\covmat + \sigma^2 \boldsymbol{I})^{-1}\crosscovepost(\configfidelity', \configfidelity_{1:N})$ with $\crosscovepost(\configfidelity, \configfidelity_{1:N}) = [\kernel(\configfidelity,\configfidelity_1) \ \dots \ \kernel(\configfidelity,\configfidelity_N)]^\top$ representing an $(NM)\times 1$ column vector with each $\kernel(\configfidelity,\configfidelity_i)$ representing an $M \times M$ covariance matrix, and $\covmat = [\kernel(\configfidelity_i,\configfidelity_j)]_{i,j=1}^N$ is the $(NM)\times(NM)$ Gram matrix. 

\begin{algorithm}[t]
    \caption{RESCUE Algorithm}
    \label{alg:rescue}
    \begin{algorithmic}[1]
        \REQUIRE $\obsdata$, $\mathcal{X}$, budget $\budget$, update cycle $N_l$
        \ENSURE Optimal configuration $\configvec^*$ \\
        \STATE Initialize $\interdata_0 = \{ (\configvec_{i}, \fidelity_{i}, \objvalues_{i})\}_{i=1}^N$, $t=0$, and  $C_0 = \sum_{i=1}^N \cost(\configvec_i, \fidelity_i)$
        \STATE Construct a causal performance model~(CPM): $\cm(\cg | \obsdata)$
        \WHILE{$C_t \leq \budget$}
            \STATE $t \gets t + 1$
            \IF{$t \mod N_l == 0$}
                \STATE $\cm_t(\cg_{t} | 
                \obsdata \cup \interdata_{0:{t-1}}) \gets$ Update the CPM
            \ENDIF
            \STATE  $\gpmodel(\configvec, \fidelity \mid \interdata_{0:{t-1}}, \cm_{t}) \gets$ Train a MF-CGP model
            \STATE Find next $(\configvec_t, \fidelity_t)$ by optimizing \eq~\eqref{eq:optimacqfn}
            \STATE Obtain $\objvalues_t$ by intervening $\configvec_t$ at $\fidelity_t$
            \STATE Augment $\interdata_{0:{t}} \gets \interdata_{0:t-1} \cup \{(\configvec_{t}, \fidelity_t, \objvalues_{t})\}$
            \STATE Update total cost $C_t \gets C_{t-1} + \cost(\configvec_t, \fidelity_t)$
        \ENDWHILE
        \STATE Obtain Pareto front using NSGA-II~\cite{deb2002nsga2} on the posterior mean at the target fidelity $\configvec^* = \nsga(\gpmodel, \targetfidelity)$ subject to $\mathbb{E}[\boldsymbol{g}(\configvec, \targetfidelity) \mid \interdata] > 0$
    \end{algorithmic}
\end{algorithm}

\subsection{Causal Hypervolume Knowledge-Gradient}

We extend the HVKG approach~\cite{HVKG_ICML2023} by integrating causal information into the acquisition to infer a hypervolume-maximizing Pareto set at $\targetfidelity$. The optimization algorithm proceeds in rounds~(hereafter referred to as iterations). At each iteration $t$, the acquisition function selects a configuration–fidelity pair $(\configvec, \fidelity)$ for intervention.

\begin{definition}[]
Given a finite Pareto frontier $\pareto \subset \mathbb{R}^M$ and a reference point $\refpoint \in \mathbb{R}^M$, the hypervolume~(HV) is the volume of the dominated space\footnote{For brevity, $\refpoint$ is omitted from the reminder of the paper.}: $\hv(\pareto, \refpoint) = \lambda(\bigcup_{\objvalues' \in \pareto} \{\objvalues \in \mathbb{R}^M : \objvalues' \preceq \objvalues \preceq \refpoint\})$, where $\lambda$ is the Lebesgue measure.
\end{definition}

Let $\interdata_{(\configvec,\fidelity)} := \interdata \cup \{(\configvec, \fidelity, \objvalues)\}$ denote the dataset augmented with an additional observation $(\configvec, \fidelity, \objvalues)$, and $\bar{\mathcal{X}} \subseteq \mathcal{X}$ denote the feasible region satisfying constraints for all $q \in [Q]$. We define $\mu_{\diamond}(\configvec, \interdata) := \{\mathbb{E}_{\interdata}[\objfunc(\configvec, \targetfidelity)]\}_{\configvec \in \configvec}$ as the MF-CGP posterior mean at the target fidelity for candidate configurations\footnote{The candidate configuration samples can be generated using techniques such as Latin Hypercube samples, and Sobol sequence.} $\configvec \in \bar{\mathcal{X}}$, $\hv_{\gp}[\interdata_{(\configvec,\fidelity)}] := \hv[\mu_{\diamond}(\configvec | \interdata_{(\configvec,\fidelity)})]$ as the hypervolume from the GP posterior mean, and $\hv_{\ci}[\interdata_{(\configvec,\fidelity)}] := \hv[\ckdo(\configvec, \targetfidelity)]$ as the hypervolume from causal estimates. The Causal Hypervolume Knowledge-Gradient~(C-HVKG) acquisition function is defined as:
\begin{equation*}
    \begin{aligned}
    \acqfunc(\configvec, \fidelity) 
    &= 
    \frac{1}{\cost(\configvec, \fidelity)} \mathbb{E}_{\interdata}\left[\max_{\configvec \subseteq \mathcal{X}} \hv[\interdata_{(\configvec, \fidelity)}] - \nu^*_{\diamond}\right], \ \text{with}
    \\
    \hv\left[\interdata_{(\configvec, \fidelity)}\right] 
    &= 
    \hv_{\gp}\left[\interdata_{(\configvec, \fidelity)}\right] + w \cdot \hv_{\ci}\left[\interdata_{(\configvec, \fidelity)}\right],
\end{aligned}
\end{equation*}
where $\nu^*_{\diamond} := \max_{\configvec \in \mathcal{X}} \hv[\interdata]$ and $w \in [0,1]$ balances the contribution from the MF-CGP posterior~(learned from $\interdata$) and the CPM estimates~(learned from $\obsdata$). Conceptually, C-HVKG quantifies the expected increase in the hypervolume of the Pareto frontier at the target fidelity by combining the predictive uncertainty of the MF-CGP with the causal knowledge encoded in the CPM, which is particularly beneficial when $\interdata$ is sparse. 
At each iteration $t$, we select the next $(\configvec_t, \fidelity_t)$ by maximizing the C-HVKG:
\begin{equation}
    (\configvec_t, \fidelity_t) = \argmax_{\substack{(\configvec, \fidelity) \in \mathcal{X} \times \fidelityspace \\ \mathbb{E}[\boldsymbol{g}(\configvec, \targetfidelity) \mid \interdata] > 0}} \acqfunc(\configvec, \fidelity).
    \label{eq:optimacqfn}
\end{equation}

Although C-HVKG cannot be computed analytically, following \citet[Theorem~B.1]{HVKG_ICML2023}, we obtain an unbiased estimator by approximating the outer expectation via Monte Carlo:
\begin{equation}
    \hat{\acqfunc}(\configvec, \fidelity) 
    = 
    \frac{1}{\cost(\configvec, \fidelity)} \left[\frac{1}{N} \sum_{i=1}^{N} \left( \max_{\configvec \subseteq \mathcal{X}} \hv\left[\interdata^i_{(\configvec, \fidelity)}\right] \right) - \nu^*_{\diamond}\right], \nonumber  
\end{equation}
where $\interdata^i_{(\configvec,\fidelity)} = \interdata \cup \{(\configvec, \fidelity, \objvalues^i)\}$ with each $\objvalues^i$ a realization or ``fantasy'' sample from the MF-CGP posterior predictive distribution $p(\objvalues | \configvec, \fidelity, \interdata)$. For each fantasy $\objvalues^i$, the updated MF-CGP posterior mean can be computed analytically. Notably, causal estimates $\ckdo(\configvec, \targetfidelity)$ remain constant across fantasies, as they are learned from $\obsdata$ and do not depend on $\interdata$. Thus, only $\hv_{\gp}$ is updated with each fantasy sample, while $\hv_{\ci}$ is computed once.

We present the complete RESCUE algorithm in Algorithm~\ref{alg:rescue}. To initialize $\interdata_0$, we use a cost-aware sampling strategy described in Appendix~\ref{app:init_sampling}~(Algorithm~\ref{alg:initial_sampling}). The time complexity of  Algorithm~\ref{alg:rescue} has three main components: (i)~initial CPM construction on $\vertex$ from $\obsdata$ using PC or DirectLiNGAM, which requires $\mathcal{O}(|\obsdata|\,|\vertex|^{i})$ for PC, where $i$ is the maximum degree of $\vertex$ in $\cg$, or $\mathcal{O}(|\obsdata|\,|\vertex|^{3} + |\vertex|^{4})$ for DirectLiNGAM; (ii)~causal inference via do-calculus for computing $\ckdo(\configvec,\fidelity)$, with complexity $\mathcal{O}(|\vertex|^{2} + |\obsdata|)$ per query; and (iii)~MF-CGP inference per iteration, with complexity $\mathcal{O}((NM)^3)$ to construct and invert the Gram matrix. The CPM is updated every $N_l$ iteration. The space complexity is $\mathcal{O}((NM)^2)$ for storing the Gram matrix, where $N$ is the number of observations.

%% file: Text/3.1_Theory.tex
\section{Theoretical Analysis}

We analyze RESCUE's performance when auxiliary sources poorly approximate the target fidelity, extending the single-objective regret bounds from~\citet{rMFBO_AISTATS2023} to the multi-objective setting. RESCUE's cumulative hypervolume regret is governed by two factors: (i)~the standard GP confidence bound that shrinks with data collection, and (ii)~the misspecification between the true objective and the causal prior computed via do-calculus from the learned CPM. We show that the regret bound remains finite regardless of the cross-fidelity bias magnitude, ensuring robustness even when auxiliary sources are arbitrarily unreliable. We provide the proofs in Appendix~\ref{app:proofs}.

\begin{definition}[]
\label{def:hvregret}
The cumulative hypervolume regret after $T$ iterations is $\sum_{t=1}^{T} (\hv^* - \hv(\pareto_t))$, where $\hv^* = \hv(\pareto^*)$ is the true hypervolume at the target fidelity and $\pareto_t$ is the Pareto-front approximation at iteration $t$.
\end{definition}

Following assumptions are required for our analysis. Assumptions~\ref{assump:gp}--\ref{assump:hvlip} are standard in GP-based BO, while Assumptions~\ref{assump:causal-agnostic}--\ref{assump:cost} concern the multi-fidelity causal setting.

\begin{assumption}[GP regularity]
\label{assump:gp}
For all $(m, q) \in [M] \times [Q]$, $f_m(\configvec, \fidelity)$ and $h_q(\configvec, \fidelity)$ lies in a Reproducing Kernel Hilbert Space~(RKHS) with a bounded, Lipschitz continuous kernel.
\end{assumption}

\begin{assumption}[HV Lipschitz continuity]
\label{assump:hvlip}
The hypervolume indicator is Lipschitz continuous: $\exists\ \lipschitz > 0$ such that for any two Pareto fronts $\pareto_1, \pareto_2$, $|\hv(\pareto_1) - \hv(\pareto_2)| \leq \lipschitz \cdot \max_{\objvalues \in \pareto_1 \cup \pareto_2} \|\objvalues_1 - \objvalues_2\|$ where $\objvalues_1 \in \pareto_1$ and $\objvalues_2 \in \pareto_2$ are corresponding points.
\end{assumption}

\begin{assumption}[Agnostic causal prior approximation]
\label{assump:causal-agnostic}
There exists a function $\ckdo:\mathcal{X}\times\mathcal{S}\to\mathbb{R}^M$ such that: (i)~for each $\fidelity \in \mathcal{S}$, every component of $\ckdo(\cdot, \fidelity)$ lies in the RKHS of Assumption~\ref{assump:gp} with norm at most $B>0$; and
(ii)~for some $\errCPM \ge 0$, $  \sup_{(\configvec,s)\in\mathcal{X}\times\mathcal{S}}
  \big\|\objfunc(\configvec,s) - \ckdo(\configvec,s)\big\|
  \;\le\; \errCPM.$
In other words, the true objective $\objfunc$ is uniformly $\errCPM$-close (in $L_\infty$ norm) to an RKHS function $\ckdo$ with bounded norm.
\end{assumption}

\begin{assumption}[Cost structure]
\label{assump:cost}
Without loss of generality, we normalize costs so that $\cost(\configvec,\targetfidelity)=1$, and there exists $c_{\min}>0$ such that $\cost(\configvec,\fidelity)\geq c_{\min}$ for all $(\configvec,\fidelity)\in\mathcal{X}\times\fidelityspace$.
\end{assumption}

The following lemmas establish key theoretical results. 

\begin{lemma}[Misspecified MF-CGP confidence bound]
\label{lem:concentration}
Under Assumptions~\ref{assump:gp} and~\ref{assump:causal-agnostic},
the causal prior error $\errCPM$ additively inflates the standard GP confidence bound.
Specifically, for any $\rho\in(0,1)$, there exists a nondecreasing sequence
$\{\beta_t\}_{t\ge 1}$ such that, with probability at least $1-\rho$, for all
$\configvec\in\bar{\mathcal{X}}$ and all $t\ge 1$,
\begin{equation*}
  \big\|\objfunc(\configvec,\targetfidelity)
        - \meanvec_t(\configvec,\targetfidelity)\big\|
  \;\leq\;
  \beta_t^{1/2}\,\big\|\sigmavec_t(\configvec,\targetfidelity)\big\|
  + \errCPM,
\end{equation*}
where $\beta_t$ is a confidence parameter depending on $B$, the information gain, and the noise level, as in the misspecified GP-UCB analyses of
\citet{bogunovic2021misspecified} and \citet{chowdhury2017kernelized}.
\end{lemma}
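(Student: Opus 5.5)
The plan is to reduce the statement to the well-specified kernelized-bandit concentration bound of \citet{chowdhury2017kernelized}, treating the causal prior mean as a known offset. The misspecification is then carried along as an additive term, as in \citet{bogunovic2021misspecified}.

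\textbf{Step 1 (decomposition).} Write $\objfunc = \ckdo + \boldsymbol{\eta}$. By Assumption~\ref{assump:causal-agnostic}(ii), $\sup\|\boldsymbol{\eta}\|\le \errCPM$. Since the MF-CGP prior mean is exactly $\ckdo$, the posterior mean at the target fidelity has the form
\[
\meanvec_t(\configfidelity) = \ckdo(\configfidelity) + \crosscovepost_t(\configfidelity)^\top(\covmat+\sigma^2\boldsymbol{I})^{-1}\bigl(\boldsymbol{\eta}(\configfidelity_{1:t}) + \boldsymbol{\epsilon}_{1:t}\bigr),
\]
with $\configfidelity=(\configvec,\targetfidelity)$. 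Hence the error $\objfunc-\meanvec_t$ at $(\configvec,\targetfidelity)$ equals $\boldsymbol{\eta}(\configfidelity)$ minus a posterior-mean regression of the residual $\boldsymbol{\eta}$ and the noise. I split this into three pieces. The first is a noise term, linear in $\boldsymbol{\epsilon}_{1:t}$. The second is an ``RKHS-projection'' term: choose $\boldsymbol{g}$ in the RKHS, with norm at most the RKHS bound $B'$ guaranteed for $\objfunc$ by Assumption~\ref{assump:gp} plus $B$, and apply the standard deterministic bound $|g-\mu^g_t|\le \|g\|_{\mathcal H}\sigma_t$. The third is the misspecification piece, of size $\errCPM$ at the test point together with the smoothed residual at the data points.

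\textbf{Step 2 (noise term).} Apply the self-normalized martingale inequality of \citet[Theorem~2]{chowdhury2017kernelized} to each of the $M$ outputs; the coregionalized kernel with $\indexkernel$ is handled as a single scalar kernel on the product space $\mathcal{X}\times\fidelityspace\times[M]$. A union bound over outputs and a peeling argument over $t$ give, with probability $1-\rho$ simultaneously for all $t$ and $\configvec$, a noise term of at most $\sqrt{2(\gamma_t+\ln(M/\rho))}\,\sigma\,\|\sigmavec_t\|$. Here $\gamma_t$ is the maximum information gain. Combining with Step 1 defines $\beta_t^{1/2} = B'+\sigma\sqrt{2(\gamma_t+1+\ln(M/\rho))}$, which is nondecreasing because $\gamma_t$ is.

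\textbf{Step 3 (misspecification, main obstacle).} Bounding the regressed residual $\crosscovepost_t^\top(\covmat+\sigma^2\boldsymbol{I})^{-1}\boldsymbol{\eta}_{1:t}$ is the main obstacle. A naive Cauchy--Schwarz bound gives $\errCPM\sqrt{t}\,\|\sigmavec_t\|/\sigma$ rather than a clean additive $\errCPM$. I would first try the route of \citet[Lemma~2]{bogunovic2021misspecified}: define $\objfunc$'s best RKHS approximant as the comparator and absorb the data-dependent part of the residual into an enlarged $\beta_t$. Concretely, this means adding $\errCPM\sqrt{t}/\sigma$ into $\beta_t^{1/2}$, which is still nondecreasing and depends only on the stated quantities. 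That leaves only the pointwise residual $\boldsymbol{\eta}(\configfidelity)$, bounded by $\errCPM$. If a cleaner form is wanted, I would instead write the posterior mean as the kernel ridge estimator of $\objfunc$ itself and use the triangle inequality $\|\objfunc-\meanvec_t\|\le\|\objfunc-\ckdo\| + \|\ckdo-\tilde\meanvec_t\|$. In that route, $\tilde\meanvec_t$ is the estimator obtained by regressing on $\ckdo$. The first term is at most $\errCPM$, and the second is well-specified except for a perturbation of the observations, which is again absorbed into $\beta_t$.

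Finally, restricting attention to $\configvec\in\bar{\mathcal{X}}$ at $\fidelity=\targetfidelity$ and summing over the $M$ components in norm yields the displayed inequality.
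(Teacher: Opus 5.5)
Your argument is correct, but it takes a different and more careful route than the paper. The paper sets $\tilde{\xi}_t=\delta(\configvec_t,\fidelity_t)+\xi_t$ and treats this as ``effective noise'' for GP regression on the latent RKHS function $\ckdo$. It asserts this noise is sub-Gaussian with a shifted mean, and then defers to \citet{chowdhury2017kernelized} and \citet{bogunovic2021misspecified} for an additively inflated width.

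You instead use that the prior mean \emph{is} $\ckdo$, so that at $\configfidelity=(\configvec,\targetfidelity)$ the error is exactly
\[
\objfunc(\configfidelity)-\meanvec_t(\configfidelity)
=\boldsymbol{\eta}(\configfidelity)
-\crosscovepost_t(\configfidelity)^\top(\covmat+\sigma^2\boldsymbol{I})^{-1}\boldsymbol{\eta}_{1:t}
-\crosscovepost_t(\configfidelity)^\top(\covmat+\sigma^2\boldsymbol{I})^{-1}\boldsymbol{\epsilon}_{1:t}.
\]
You then bound the three terms separately:
\begin{itemize}
\item the first by $\errCPM$;
\item the second deterministically, through Cauchy--Schwarz and $\|(\covmat+\sigma^2\boldsymbol{I})^{-1}\crosscovepost_t\|\le\sigma_t/\sigma$;
\item only the third by the self-normalized martingale inequality.
\end{itemize}

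This buys validity. That inequality needs conditionally zero-mean noise, so a deterministic, query-dependent bias cannot be folded into the noise as the paper does. Your separation is exactly the argument behind EC-GP-UCB. It also makes explicit what the paper glosses over: $\beta_t^{1/2}$ must absorb an extra $\errCPM\sqrt{t}/\sigma$, so $\beta_t$ depends on $\errCPM$ and $t$ as well. The lemma's existential claim (some nondecreasing $\beta_t$) is still met, and this matches the enlarged confidence of \citet{bogunovic2021misspecified}. However, the parenthetical saying $\beta_t$ depends only on $B$, the information gain and the noise level is not what the cited analyses give. The paper's version buys only brevity.

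Some simplifications. Your ``RKHS-projection'' piece and the constant $B'$ are superfluous. With prior mean $\ckdo$ there is no function-learning term: the three terms above are the whole error, and Assumption~\ref{assump:causal-agnostic}(i) is not even used. Up to normalization constants this gives $\|\objfunc-\meanvec_t\|\le\errCPM+\bigl(\sqrt{2(\gamma_t+\ln(1/\rho))}+\errCPM\sqrt{t}/\sigma\bigr)\|\sigmavec_t\|$.

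Your three-way split in Step~1 in fact collapses either way. With $\boldsymbol{g}=0$ the second piece vanishes. With $\boldsymbol{g}=\boldsymbol{\eta}$ (in the RKHS if both assumptions refer to the same joint RKHS) the third piece vanishes. You then get a bound with neither the additive $\errCPM$ nor the $\sqrt{t}$, at the price of $\|\boldsymbol{\eta}\|_{\mathcal{H}}$ in $\beta_t$, a constant the lemma does not name.

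Finally, no peeling or union bound over outputs is needed. The self-normalized bound is already uniform in $t$, and on $\mathcal{X}\times\fidelityspace\times[M]$ it covers all outputs at once. Just use its general form with regularizer $\sigma^2$, rather than the specific regularization for which Theorem~2 of \citet{chowdhury2017kernelized} is stated.
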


\begin{lemma}[HV approximation bound]
\label{lem:hvbound}
Let $\bar{\mathcal{X}} \subseteq \mathcal{X}$ denote the set of truly feasible configurations.
Let $\pareto^*$ be the true Pareto front of $\objfunc$ over $\bar{\mathcal{X}}$, and write $\hv^* := \hv(\pareto^*)$ for its HV. Define the surrogate-induced Pareto front $\pareto_t =
  \text{Pareto}\bigl(\{\meanvec_t(\configvec,\targetfidelity)
  : \configvec \in \bar{\mathcal{X}}\}\bigr).$
Under Assumption~\ref{assump:hvlip}, there exists a constant $\lipschitz > 0$ such that the HV approximation error is bounded by the maximum prediction error:
\begin{equation*}
  \hv^* - \hv(\pareto_t)
  \;\le\;
  \lipschitz \cdot
  \max_{\configvec \in \bar{\mathcal{X}}}
  \bigl\|\objfunc(\configvec,\targetfidelity)
        - \meanvec_t(\configvec,\targetfidelity)\bigr\|.
\end{equation*}
\end{lemma}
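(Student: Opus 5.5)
The plan is to apply Assumption~\ref{assump:hvlip} directly, with $\pareto_1=\pareto^*$ and $\pareto_2=\pareto_t$, after building an explicit correspondence between the two fronts that comes from the configurations themselves. Both fronts are images of the same feasible set $\bar{\mathcal{X}}$: $\pareto^*$ under $\configvec\mapsto\objfunc(\configvec,\targetfidelity)$ and $\pareto_t$ under $\configvec\mapsto\meanvec_t(\configvec,\targetfidelity)$. The natural pairing is therefore $\objvalues_1=\objfunc(\configvec,\targetfidelity)$ with $\objvalues_2=\meanvec_t(\configvec,\targetfidelity)$ for a shared $\configvec$. Under this pairing, every corresponding pair differs by at most $\Delta_t:=\max_{\configvec\in\bar{\mathcal{X}}}\|\objfunc(\configvec,\targetfidelity)-\meanvec_t(\configvec,\targetfidelity)\|$. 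Since $\mathcal{X}$ is finite (finite domains), this maximum exists.

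\textbf{Step 1: reduce to full image sets.} I would pass from Pareto fronts to the full image sets $F=\{\objfunc(\configvec,\targetfidelity)\}_{\configvec\in\bar{\mathcal{X}}}$ and $M_t=\{\meanvec_t(\configvec,\targetfidelity)\}_{\configvec\in\bar{\mathcal{X}}}$. The dominated region of a finite set equals that of its nondominated subset, so $\hv(\pareto^*)=\hv(F)$ and $\hv(\pareto_t)=\hv(M_t)$. On $F$ and $M_t$ the indexing by $\configvec$ gives a bijection, and under it every pair is within $\Delta_t$.

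\textbf{Step 2: elementary fallback.} If one prefers not to lean entirely on the abstract assumption, there is a direct argument. Shifting every point of $M_t$ by $\Delta_t\mathbf{1}$ componentwise, or more precisely by the $\ell_\infty$ error, which is at most the Euclidean one, gives a set $M_t+\Delta_t\mathbf{1}$ that weakly dominates nothing better than $F$ in the minimization sense. Hence the region dominated by $F$ is contained in the region dominated by $M_t$ translated by $\Delta_t\mathbf{1}$, intersected with the box up to $\refpoint$. The volume difference between a dominated region and its $\Delta_t$-translate inside a bounded reference box is at most $\Delta_t$ times the sum of the $(M-1)$-dimensional face areas of the box. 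This yields a concrete $\lipschitz$ depending on $\refpoint$ and a lower bound on the objectives. The lower bound is available because Assumption~\ref{assump:gp} gives bounded RKHS functions, and I would likewise bound the posterior means.

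\textbf{Step 3: conclude.} Either route gives $|\hv(F)-\hv(M_t)|\le\lipschitz\Delta_t$, and dropping the absolute value gives the one-sided statement.

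\textbf{Main obstacle.} The delicate point is the meaning of ``corresponding points'' in Assumption~\ref{assump:hvlip}. Pareto fronts can have different cardinalities, and a point on $\pareto_t$ may come from a configuration whose true value is dominated, so not every pair lies on both fronts. I would handle this with Step 1: working with full image sets indexed by $\bar{\mathcal{X}}$ avoids needing the fronts themselves to match. A second subtlety is points falling outside the reference box. I would handle this by clipping to the box, which is $1$-Lipschitz and so only helps the bound.
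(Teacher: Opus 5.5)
Your Steps~1 and~3 are the paper's proof. The paper likewise replaces the two fronts by the full image sets $A=\{\objfunc(\configvec,\targetfidelity)\}_{\configvec\in\bar{\mathcal{X}}}$ and $B_t=\{\meanvec_t(\configvec,\targetfidelity)\}_{\configvec\in\bar{\mathcal{X}}}$. It uses that dominated points do not change the hypervolume, invokes Assumption~\ref{assump:hvlip} with the pairing by shared $\configvec$, and drops the absolute value. Like the paper, that route applies an assumption worded for Pareto fronts to arbitrary finite image sets, and the assumption is then essentially the lemma itself. Your Step~2 is the genuinely different part. It derives the needed Lipschitz property from the geometry of dominated regions, needs only a bounded box, and gives an explicit constant. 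For example, $\lipschitz=\sum_{i=1}^{M}\prod_{j\neq i}(r_j-\ell_j)$ suffices, with $\refpoint=(r_1,\dots,r_M)$ and $\boldsymbol{\ell}$ a componentwise lower bound on $F$. What the assumption-based route buys is brevity; what your fallback buys is an actual proof of the Lipschitz property.

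In Step~2 the justification points the wrong way. Knowing that $M_t+\Delta_t\mathbf{1}$ is dominated by $F$ yields only $\hv(M_t)-\hv(F)\le\lipschitz\Delta_t$, which is the opposite of the claimed inequality. The inclusion you want requires translating toward the ideal point. You need $\objfunc(\configvec,\targetfidelity)\succeq\meanvec_t(\configvec,\targetfidelity)-\Delta_t\mathbf{1}$, so that the region dominated by $F$ lies inside the region dominated by $M_t-\Delta_t\mathbf{1}$. The error bound is symmetric, so this inequality is available and the problem is a slip, not a gap. For this one-sided direction the box only needs the lower bound $\boldsymbol{\ell}$ on the true values, which is fixed in $t$. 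So you do not need to bound the posterior means, a bound that would not obviously be uniform in $t$.
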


\begin{lemma}[One-step acquisition comparison]
\label{lem:onestep}
Let $(\configvec_t, \fidelity_t)$ is selected by C-HVKG at iteration $t$, and let $\Delta_t^{\text{SF}}(\configvec_t)$ denote the expected hypervolume improvement that would be achieved by evaluating
$\configvec_t$ at the target fidelity $\targetfidelity$ under the true objective function
$\objfunc$:
\[
  \Delta_t^{\text{SF}}(\configvec_t)
  =
  \mathbb{E}\left[
    \hv(\pareto_{t+1}^{\text{SF}}) - \hv(\pareto_t^{\text{SF}})
    \,\middle|\,
    \configvec_t, \targetfidelity, \interdata_{0:t-1}
  \right].
\]
Then the cost-weighted acquisition value of \our satisfies
{\small
\begin{equation}
\label{eq:onestep-bound}
  \cost(\configvec_t, \fidelity_t)\,\acqfunc(\configvec_t, \fidelity_t)
  \geq
  \Delta_t^{\text{SF}}(\configvec_t)
  -
  \lipschitz
  \left(
    \max_{\configvec \in \bar{\mathcal{X}}}
      \beta_t^{1/2}\|\sigmavec_t(\configvec, \targetfidelity)\|
    + \errCPM
  \right).
\end{equation}
}
\end{lemma}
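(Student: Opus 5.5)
Since $\cost(\configvec_t,\fidelity_t)\,\acqfunc(\configvec_t,\fidelity_t)$ is, by the definition of C-HVKG, the expected value-of-information term $\mathbb{E}_{\interdata}[\max_{\configvec}\hv[\interdata_{(\configvec_t,\fidelity_t)}]-\nu^*_{\diamond}]$, we will bound this expected one-step gain from below by the single-fidelity gain $\Delta_t^{\text{SF}}(\configvec_t)$ minus the worst-case error of the surrogate at the target fidelity. Because C-HVKG maximizes over $(\configvec,\fidelity)$, its value at the chosen pair is at least its value at $(\configvec_t,\targetfidelity)$. With $\cost(\configvec_t,\targetfidelity)=1$ (Assumption~\ref{assump:cost}), this reduces the cost-weighted quantity to an un-normalized expected hypervolume gain at the target fidelity. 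We also use $w\ge 0$ and the fact that $\hv_{\ci}$ is constant across fantasies: the causal term appears in both the fantasized maximum and $\nu^*_{\diamond}$, so it contributes nonnegatively and can be dropped from the lower bound. The comparison thus reduces to the GP-posterior-mean hypervolumes $\hv_{\gp}$.

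Next we relate the surrogate gain to the true gain. We write
\[
\hv_{\gp}[\interdata_{(\configvec_t,\targetfidelity)}]-\hv_{\gp}[\interdata]
=\bigl(\hv(\pareto^{\text{SF}}_{t+1})-\hv(\pareto^{\text{SF}}_t)\bigr)+E_{t+1}-E_t,
\]
where $E$ collects the discrepancies between hypervolumes computed from $\meanvec$ and from the true $\objfunc$ on the corresponding fronts. Taking conditional expectations produces $\Delta_t^{\text{SF}}(\configvec_t)$ plus error terms. Each error term is controlled by Assumption~\ref{assump:hvlip}, or equivalently Lemma~\ref{lem:hvbound}, as $\lipschitz\max_{\configvec\in\bar{\mathcal{X}}}\|\objfunc(\configvec,\targetfidelity)-\meanvec(\configvec,\targetfidelity)\|$. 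On the high-probability event of Lemma~\ref{lem:concentration}, this is at most $\lipschitz(\max_{\configvec}\beta_t^{1/2}\|\sigmavec_t(\configvec,\targetfidelity)\|+\errCPM)$.

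The main obstacle is that a naive split produces two error terms, one at iteration $t$ and one after the fantasy update, which would give a factor of $2$. To obtain the constant as stated, we would argue that the post-update term is favorable in expectation: the knowledge-gradient maximum over $\configvec$ of the posterior-mean hypervolume dominates, by Jensen's inequality and the martingale property of the posterior mean under fantasies, the hypervolume evaluated at the current Pareto set. Only the error of the current-step surrogate relative to the truth then needs to be charged. If this does not close cleanly, we would absorb the factor into $\lipschitz$, since Lemma~\ref{lem:hvbound} only asserts the existence of some constant. Finally, the posterior variance after the update is no larger than $\sigmavec_t$, and $\beta_t$ is nondecreasing, so the bound can be expressed with $\beta_t$ and $\sigmavec_t$ alone, which yields \eqref{eq:onestep-bound}.
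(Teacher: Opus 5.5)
Your skeleton matches the paper's proof. You compare the chosen pair with $(\configvec_t,\targetfidelity)$ through the argmax, transfer the model-based gain to $\Delta_t^{\text{SF}}(\configvec_t)$ with Lemma~\ref{lem:hvbound}, and then insert Lemma~\ref{lem:concentration}. The first step, however, does not give what the statement needs. The paper's proof has the same defect, where it calls maximizing $\acqfunc$ ``equivalent'' to maximizing $\cost\,\acqfunc$.

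Write $V(\configvec,\fidelity):=\cost(\configvec,\fidelity)\,\acqfunc(\configvec,\fidelity)$ for the un-normalized knowledge-gradient value. The selection rule gives $\acqfunc(\configvec_t,\fidelity_t)\ge\acqfunc(\configvec_t,\targetfidelity)=V(\configvec_t,\targetfidelity)$. That is, it gives only $V(\configvec_t,\fidelity_t)\ge\cost(\configvec_t,\fidelity_t)\,V(\configvec_t,\targetfidelity)$. Since $\cost(\configvec_t,\fidelity_t)<1$ whenever $\fidelity_t\neq\targetfidelity$, this is strictly weaker than the inequality $V(\configvec_t,\fidelity_t)\ge V(\configvec_t,\targetfidelity)$ that your reduction silently uses. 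A cheap, barely informative fidelity can win the ratio: for example $V=0.2$ at cost $0.1$ beats $V=1$ at cost $1$. So your argument can at best prove one of two weaker forms:
\begin{itemize}
\item the bound with $\acqfunc(\configvec_t,\fidelity_t)$ on the left, with no cost factor;
\item the stated bound with its right-hand side multiplied by $\cost(\configvec_t,\fidelity_t)$.
\end{itemize}
The cost-weighted form as stated does not follow.

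The later steps also do not close as sketched.
\begin{itemize}
\item \emph{Dropping the causal term.} $\hv_{\ci}$ depends on the candidate set over which both maxima (the fantasized one and $\nu^*_{\diamond}$) are taken, so it does not cancel. The gain with $w>0$ can be smaller than with $w=0$: if the causal term pins the same candidate set as maximizer before and after the fantasy, the gain is only the change of $\hv_{\gp}$ at that set.
\item \emph{Jensen and the martingale property.} This repair of the factor $2$ would need $\hv$ to be convex in the posterior-mean vectors, and it is not. For a single point with $M=2$ it equals $(r_1-y_1)(r_2-y_2)$, whose Hessian is indefinite.
\item \emph{Losing $\Delta_t^{\text{SF}}$.} Even a valid statement $V(\configvec_t,\targetfidelity)\ge 0$ discards $\Delta_t^{\text{SF}}$. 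You would still need either $\Delta_t^{\text{SF}}(\configvec_t)\le\lipschitz\max_{\configvec}\|\objfunc-\meanvec_t\|$ or a comparison of the post-fantasy model front with $\pareto_{t+1}^{\text{SF}}$.
\item \emph{Scope of Lemma~\ref{lem:concentration}.} That comparison is not covered by Lemma~\ref{lem:concentration}, which concerns posteriors conditioned on data generated by $\objfunc$, not on fantasies drawn from the model's predictive. Moreover, the knowledge-gradient expectation is taken under that predictive, whereas $\Delta_t^{\text{SF}}$ is taken under the true outcome law.
\item \emph{Monotonicity of $\beta_t$.} Your closing remark runs the wrong way. A nondecreasing $\beta_t$ means $\beta_{t+1}\ge\beta_t$, so a shrinking $\sigmavec$ does not yield $\beta_{t+1}^{1/2}\|\sigmavec_{t+1}\|\le\beta_t^{1/2}\|\sigmavec_t\|$.
\item \emph{Absorbing the factor $2$.} Folding it into $\lipschitz$ changes the statement, because $\lipschitz$ is the fixed constant of Assumption~\ref{assump:hvlip} that is reused in Theorem~\ref{thm:main}.
\end{itemize}
The paper simply asserts a single error term $\lipschitz\max\|\objfunc-\meanvec_t\|$ for the pre/post comparison, so it does not supply this argument either. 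You were right to flag the issue, but the lemma needs a genuine proof of that step or a weaker stated constant.
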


\begin{theorem}[\our cumulative HV regret]
\label{thm:main}
Under Assumptions~\ref{assump:gp}, \ref{assump:causal-agnostic}, and
\ref{assump:hvlip}, fix any horizon $T\ge 1$.
Then for any $\rho\in(0,1)$, there exists a nondecreasing sequence
$\{\beta_t\}_{t\ge 1}$ such that, with probability at least $1-\rho$,
{\small
\begin{equation}
\label{eq:main}
  \sum_{t=1}^{T} \bigl(\hv^* - \hv(\pareto_t)\bigr)
  \leq
  \lipschitz
  \sum_{t=1}^{T}
    \left(
      \beta_t^{1/2}
      \max_{\configvec\in\bar{\mathcal{X}}}
        \|\sigmavec_t(\configvec,\targetfidelity)\|
      + \errCPM
    \right),
\end{equation}
}
\end{theorem}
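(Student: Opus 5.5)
The plan is to sum a per-iteration bound obtained by chaining Lemma~\ref{lem:hvbound} with Lemma~\ref{lem:concentration}. The one point that needs care is that the probability-$1-\rho$ event must hold uniformly over all iterations $t\le T$. Otherwise a naive union bound over $T$ steps would degrade the confidence to $1-T\rho$. Lemma~\ref{lem:concentration} is stated as holding simultaneously ``for all $\configvec\in\bar{\mathcal{X}}$ and all $t\ge 1$'' with one choice of the nondecreasing sequence $\{\beta_t\}$. So I will fix $\rho\in(0,1)$, take exactly that sequence, and work on the single good event $\mathcal{E}$ with $\Pr(\mathcal{E})\ge 1-\rho$.

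\textbf{Step 1 (per-iteration HV gap).} Fix $t\in[T]$. By Lemma~\ref{lem:hvbound}, which uses Assumption~\ref{assump:hvlip} and the surrogate Pareto front $\pareto_t$ built from $\meanvec_t(\cdot,\targetfidelity)$ over $\bar{\mathcal{X}}$,
\[
\hv^*-\hv(\pareto_t)\le \lipschitz\max_{\configvec\in\bar{\mathcal{X}}}\|\objfunc(\configvec,\targetfidelity)-\meanvec_t(\configvec,\targetfidelity)\|.
\]
This step is deterministic; it does not use the probabilistic event.

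\textbf{Step 2 (insert the confidence bound).} On $\mathcal{E}$, Lemma~\ref{lem:concentration} (which uses Assumptions~\ref{assump:gp} and~\ref{assump:causal-agnostic}) gives, for every $\configvec\in\bar{\mathcal{X}}$,
\[
\|\objfunc-\meanvec_t\|\le\beta_t^{1/2}\|\sigmavec_t(\configvec,\targetfidelity)\|+\errCPM.
\]
I then take the maximum over $\configvec$ on both sides. Since $\beta_t^{1/2}$ and $\errCPM$ do not depend on $\configvec$, the right-hand maximum equals $\beta_t^{1/2}\max_{\configvec}\|\sigmavec_t\|+\errCPM$. The maximum is attained because the domains are finite; if needed, a supremum works equally well. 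Combining with Step 1 gives the per-step bound
\[
\hv^*-\hv(\pareto_t)\le\lipschitz\bigl(\beta_t^{1/2}\max_{\configvec}\|\sigmavec_t\|+\errCPM\bigr).
\]

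\textbf{Step 3 (sum).} Sum the per-step bound over $t=1,\dots,T$. All $T$ inequalities hold simultaneously on the same event $\mathcal{E}$, so the summed inequality \eqref{eq:main} holds with probability at least $1-\rho$.

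\textbf{Main obstacle.} The subtle point is the uniform-in-$t$ validity of the confidence bound. I would justify it as in the misspecified kernelized bandit analyses of \citet{chowdhury2017kernelized} and \citet{bogunovic2021misspecified}. Self-normalized martingale concentration gives an anytime bound for the RKHS approximant $\ckdo$, with $\beta_t$ nondecreasing through the maximum information gain. The misspecification term contributes only the additive $\errCPM$, via the triangle inequality $\|\objfunc-\meanvec_t\|\le\|\objfunc-\ckdo\|+\|\ckdo-\meanvec_t\|$. This argument is already packaged in Lemma~\ref{lem:concentration}.

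A secondary caveat concerns Assumption~\ref{assump:hvlip}. It must apply to $\pareto^*$ versus $\pareto_t$, whose points are matched through the common configuration $\configvec$, i.e.\ $\objfunc(\configvec,\targetfidelity)\leftrightarrow\meanvec_t(\configvec,\targetfidelity)$. That matching is exactly what Lemma~\ref{lem:hvbound} provides, so no further work is required.

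Finally, the bound is independent of the cross-fidelity bias $\boldsymbol{\delta}$, because only target-fidelity quantities enter. This is the claimed robustness to arbitrarily unreliable sources.
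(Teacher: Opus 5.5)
Your proposal is correct and follows the paper's proof essentially verbatim. You apply Lemma~\ref{lem:hvbound} at each $t$, invoke Lemma~\ref{lem:concentration} on its single event (valid simultaneously for all $\configvec$ and all $t$), pull the $\configvec$-independent terms $\beta_t^{1/2}$ and $\errCPM$ out of the maximum, and sum over $t=1,\dots,T$.

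The only quibble is with your heuristic justification of the lemma. Because $\meanvec_t$ is fit to observations of $\objfunc$ rather than of $\ckdo$, the triangle inequality alone does not give a purely additive $\errCPM$. The cited misspecified analyses also enlarge the coefficient of $\|\sigmavec_t\|$ by a term of order $\errCPM\sqrt{t}$, which must be absorbed into $\beta_t$. Since the theorem only uses the lemma as stated, this does not affect your argument.
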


\textit{Proof sketch.} Lemma~\ref{lem:hvbound} bounds $\hv^* - \hv(\pareto_t)$ by the maximum prediction error over the feasible region $\bar{\mathcal{X}}$. Lemma~\ref{lem:concentration} provides a uniform bound on the prediction error for each configuration, with the error controlled by $\beta_t^{1/2} \|\sigmavec_t(\configvec,\targetfidelity)\| + \errCPM$, where the first term captures GP uncertainty and the second term quantifies the causal misspecification. Combining these two lemmas and maximizing over configurations yields the per-iteration bound. Summing over all iterations $t=1,\dots,T$ produces the cumulative regret bound in~\eq~\eqref{eq:main}. See Appendix~\ref{proof:main} for the complete proof.

An immediate consequence of Theorem~\ref{thm:main} addresses the critical concern of unreliable auxiliary sources.

\begin{corollary}[Unreliable auxiliary source robustness]
\label{cor:unreliable}
Suppose Assumption~\ref{assump:causal-agnostic} holds, and let $\boldsymbol{\delta}(\configvec,\fidelity)
  := \objfunc(\configvec,\fidelity) - \objfunc(\configvec,\targetfidelity)$ denote the cross-fidelity bias of auxiliary sources at fidelity $\fidelity \neq \targetfidelity$. Assume that the causal prior $\ckdo(\configvec,\targetfidelity)$ is learned from observational data $\obsdata$ via an appropriate causal identification strategy~(\eg do-calculus), so that the approximation error $\errCPM$ depends only on the quality and size of $\obsdata$ and not on $\boldsymbol{\delta}$.

Then, for any horizon $T$ and any $\rho\in(0,1)$, the RESCUE regret bound of Theorem~\ref{thm:main} holds with the same constants \emph{independently of} the magnitude of the cross-fidelity
bias $\boldsymbol{\delta}$. In particular, for any sequence of auxiliary sources with $\sup_{\configvec,\fidelity\neq\targetfidelity}\|\boldsymbol{\delta}(\configvec,\fidelity)\|
\to\infty$, the regret bound remains finite and unchanged.
\end{corollary}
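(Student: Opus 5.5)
The plan is to read the corollary off Theorem~\ref{thm:main}: we check that none of the quantities on the right-hand side of \eq~\eqref{eq:main} depends on the cross-fidelity bias $\boldsymbol{\delta}$. Those quantities are $\lipschitz$, $\errCPM$, the sequence $\{\beta_t\}$, and the posterior standard deviations $\|\sigmavec_t(\configvec,\targetfidelity)\|$. Each is examined in turn.

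\emph{$\lipschitz$ and $\errCPM$.} The constant $\lipschitz$ comes from Assumption~\ref{assump:hvlip}. It is a property of the hypervolume indicator on $\mathbb{R}^M$ together with the reference point, so it involves no fidelity at all. By the hypothesis of the corollary, $\errCPM$ depends only on $\obsdata$ and on the identification strategy, not on $\boldsymbol{\delta}$.

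\emph{$\beta_t$.} In Lemma~\ref{lem:concentration}, $\beta_t$ is built from the RKHS norm bound $B$, the noise level $\sigma$, the confidence level $\rho$, and the maximal information gain $\gamma_t$ of the kernel $\kernel_{m,m'}$. The norm bound $B$ is attached to $\ckdo$ through Assumption~\ref{assump:causal-agnostic}(i), so the misspecified-GP analysis never uses the RKHS norm of $\objfunc$ at auxiliary fidelities, which is where $\boldsymbol{\delta}$ would otherwise enter. The deviation of $\objfunc(\cdot,\fidelity)$ from $\ckdo(\cdot,\fidelity)$ is absorbed entirely into the additive term $\errCPM$. The quantity $\gamma_t$ is a worst case over input sequences and depends only on the kernel and $\sigma$. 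Hence $\beta_t$ can be chosen independently of $\boldsymbol{\delta}$.

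\emph{$\sigmavec_t$.} The GP posterior variance depends only on the queried locations $\configfidelity_{1:t}$ and the kernel, not on the observed values $\objvalues$. The queried locations, however, are chosen by C-HVKG, which uses the observations and therefore indirectly $\boldsymbol{\delta}$. I would handle this in one of two ways. The first is to bound $\beta_t^{1/2}\max_{\configvec}\|\sigmavec_t(\configvec,\targetfidelity)\|$ by the prior bound $\beta_t^{1/2}\sup_{\configfidelity}\kernel(\configfidelity,\configfidelity)^{1/2}$, which is finite by the boundedness in Assumption~\ref{assump:gp} (posterior variance never exceeds prior variance). The resulting bound $\lipschitz\sum_{t}(\beta_t^{1/2}\bar{k}^{1/2}+\errCPM)$ is explicitly independent of $\boldsymbol{\delta}$. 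The second is to note that Theorem~\ref{thm:main} holds for every realized query sequence, so whatever sequence arises, the same $\boldsymbol{\delta}$-free constants apply.

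Finally, take any sequence of auxiliary sources with $\sup\|\boldsymbol{\delta}\|\to\infty$. Each member of the sequence satisfies the hypotheses with the same $B$, $\errCPM$, and kernel, so the bound is unchanged along the sequence and stays finite for every fixed $T$.

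The main obstacle is the consistency of the hypotheses rather than any computation. Assumption~\ref{assump:causal-agnostic}(ii) requires $\|\objfunc(\configvec,\fidelity)-\ckdo(\configvec,\fidelity)\|\le\errCPM$ uniformly over all fidelities. If $\errCPM$ is held fixed, this seemingly caps how far $\boldsymbol{\delta}$ can grow, unless $\ckdo(\cdot,\fidelity)$ moves with the source while its RKHS norm stays bounded by $B$. To avoid this tension, I would first restrict the use of (ii) to the target fidelity, since Lemmas~\ref{lem:concentration} and~\ref{lem:hvbound} only invoke the bound at $\targetfidelity$. I would then argue that the auxiliary-fidelity observations enter only through $\sigmavec_t$, which the prior-variance bound above controls. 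If the misspecified concentration genuinely needs (ii) at every fidelity, I would instead state the corollary for the family of auxiliary sources satisfying Assumption~\ref{assump:causal-agnostic} with fixed $(B,\errCPM)$, and interpret ``independently of $\boldsymbol{\delta}$'' as meaning the bound does not depend on $\boldsymbol{\delta}$ within that family.
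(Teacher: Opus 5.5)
Your main line is the paper's own argument. The appendix proof also reads the corollary off Theorem~\ref{thm:main}. It notes that $\errCPM$ is free of $\boldsymbol{\delta}$ by hypothesis, since $\ckdo(\cdot,\targetfidelity)$ is identified from $\obsdata$, and that the GP posterior variance does not depend on observed values.

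You are more careful than the paper in two places:
\begin{itemize}
\item The paper never checks $\beta_t$.
\item The paper simply asserts that $\boldsymbol{\delta}$ ``does not affect'' $\sigmavec_t(\configvec,\targetfidelity)$. This ignores that C-HVKG chooses the query locations from the observed values.
\end{itemize}
Your second fix is the right way to handle the second point: the theorem holds for whatever query sequence is realized, with the same $\lipschitz$, $\{\beta_t\}$ and $\errCPM$. The prior-variance fix is also valid, but it replaces the theorem's right-hand side with a cruder bound.

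The step that fails is option (a) in your last paragraph. Restricting Assumption~\ref{assump:causal-agnostic}(ii) to $\targetfidelity$ cannot work. It is also false that auxiliary observations ``enter only through $\sigmavec_t$''. Their \emph{values} enter $\meanvec_t(\cdot,\targetfidelity)$ through the cross-fidelity kernel $\kernel_{\operatorname{fid}}$. This is why the paper's proof of Lemma~\ref{lem:concentration} folds the misspecification $\objfunc-\ckdo$ at every queried pair, auxiliary fidelities included, into the effective noise, and needs it uniformly bounded.

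Concretely, take $M=1$ and a single auxiliary observation $y$ at $(\configvec,\fidelity)$. The posterior mean at $(\configvec,\targetfidelity)$ is
\[
\ckdo(\configvec,\targetfidelity) + \frac{\kernel((\configvec,\targetfidelity),(\configvec,\fidelity))}{\kernel((\configvec,\fidelity),(\configvec,\fidelity))+\sigma^2}\,\bigl(y-\ckdo(\configvec,\fidelity)\bigr),
\]
and the coefficient is strictly positive because the RBF kernel $\kernel_{\operatorname{fid}}$ is positive. Once (ii) is dropped at $\fidelity$, the residual $y-\ckdo(\configvec,\fidelity)$ grows with $\boldsymbol{\delta}(\configvec,\fidelity)$. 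Then $\|\objfunc(\configvec,\targetfidelity)-\meanvec_t(\configvec,\targetfidelity)\|\to\infty$ while $\sigmavec_t(\configvec,\targetfidelity)$ stays fixed, so no version of the lemma that is free of $\boldsymbol{\delta}$ survives. Your fallback (b) is therefore forced, and it is exactly the reading the paper's proof uses implicitly.

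Under (b), the tension you flagged is real, not ``seeming'', and the escape you mention does not exist. By the reproducing property and boundedness of the kernel, any $g$ with $\|g\|_{\mathcal{H}}\le B$ satisfies $|g|\le B\kappa^{1/2}$, where $\kappa:=\sup\kernel$. Hence $\ckdo$ cannot drift with the source. Every source satisfying Assumption~\ref{assump:causal-agnostic} with the fixed $(B,\errCPM)$ obeys $\|\boldsymbol{\delta}(\configvec,\fidelity)\|\le 2\errCPM+2\sqrt{M}B\kappa^{1/2}$.

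So, within the family where the constants really are the same, the clause about $\sup\|\boldsymbol{\delta}\|\to\infty$ holds only vacuously. Any genuinely divergent family must let $B$ or $\errCPM$ grow, which changes $\beta_t$ or $\errCPM$ and hence the bound. The paper's proof does not notice this either. Your write-up should state it explicitly rather than suggest the tension might be avoided.
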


%% file: Text/3_Experiments_and_Results.tex
\section{Experiments}
We evaluate \our in a variety of synthetic and real-world problems to assess its effectiveness in finding optimal configurations within a given budget. Our evaluation seeks to answer the following Research Questions (RQs): (1)~To what extent \our improves sample efficiency and optimization performance under a fixed budget compared to the baselines? (2)~How does the relative performance of \our change with problem dimensionality and the presence of constraints? (3)~How does \our allocate evaluations across fidelities compared to the baselines, and does this lead to cost-effective use of low- and high-fidelity information sources? See Appendix~\ref{app:exp_details} for full experimental details\footnote{Code for reproducing the experiments is available at \url{https://anonymous.4open.science/r/rescue-EC70}}.

\subsection{Experimental settings}
\paragraph{Baselines}
We compare \our against qEHVI~\cite{daulton2020differentiable}~(non-MF), MOMF~\cite{MOMF_2023}, and MF-HVKG~\cite{HVKG_ICML2023}. We use BoTorch implementations~\cite{botorch} for all baselines. For qEHVI, we treat $\fidelity$ as a design variable.

\paragraph{Evaluation metric} 
Following the evaluation criterion of \citet{HVKG_ICML2023}, we obtain an approximate Pareto set by optimizing the MF-CGP posterior means with NSGA-II. We then evaluate the true objectives at these configurations and compute the HV dominated by the resulting true Pareto frontier, referred to as the inferred HV. We report the mean and standard deviation over $20$ independent runs of each experiment to evaluate the trade-off between cost and log HV regret. The log HV regret is defined as the logarithm of the difference between the maximum HV of the true Pareto frontier and the inferred HV.

\paragraph{Synthetic Problems} 
We consider two synthetic MF test functions from~\citet{MOMF_2023}: (i)~Branin-Currin ($d=2, M=2$) and (ii)~Park ($d=4, M=2$).

\paragraph{Real-World Problems}
We consider five real-world problem classes spanning robotics, ML, and Healthcare. (i)~\textbf{Robot Navigation}~($d=26, M=2, Q=2$) from \citet{hossen2025cure} involves tuning planner, controller, and costmap parameters for \textit{Turtlebot~4} navigation in a complex environment. The target system operates with obstacles and narrow passages, while two sources consist of a \textit{Turtlebot~4} in an obstacle-free setting and a \textit{Gazebo} simulation~(Fig.~\ref{fig:tbot_env}). The task is deemed successful if the robot reaches three goal locations and dock autonomously to a charging station, and the objectives are evaluated under constraints of $\mathrm{collision \ risk} < 5.5$ and $\mathrm{task \ completion \ rate} \ge 0.8$. For reproducibility, we curate a benchmark by collecting real-robot samples and provide a surrogate model derived from these data (Appendix~\ref{app:tbot_surrogate}). (ii)~\textbf{XGBoost HPO}~($d=13, M=2$) and (iii)~\textbf{Ranger HPO}~($d=5, M=3$) use YAHPO Gym~\citep{yahpo_AISTATS2022}, where fidelity is determined by the number of training samples. (iv)~\textbf{Constrained XGBoost HPO}~($d=13, M=2, Q=1$) and (v)~\textbf{Constrained Ranger HPO}~($d=5, M=2, Q=1$) are constrained variants from the same benchmark. (vi)~\textbf{Healthcare}~($d=3, M=2, Q=1$), following the setup in \citet{cbo_Neurips2020}, we consider optimizing statin use and Prostate Specific Antigen~(PSA) while enforcing a clinical constraint that $\mathrm{cancer \ risk} < 0.3$. 
\begin{figure}[t]
  \begin{center}
    \centerline{\includegraphics[trim={0.7cm 0.25cm 0.85cm 0.25cm},clip,width=1\columnwidth]{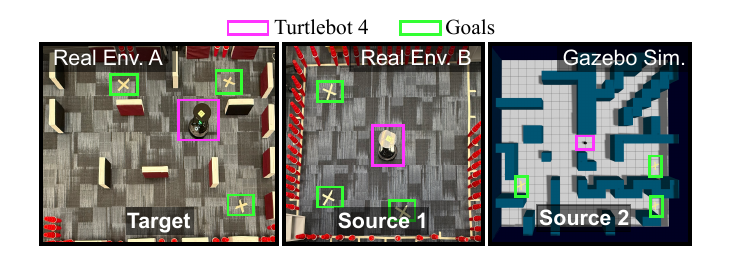}}
    \caption{Real-robot environments with and without obstacles~(left and middle, respectively), and a \textit{Gazebo} simulation~(right).}
    \label{fig:tbot_env}
  \end{center}
  \vspace{-2.5em}
\end{figure}
For all experiments, the cost function is defined as $\cost(\configvec,\fidelity)=\exp(4.8\fidelity)$, so that for a fixed $\fidelity$ all configurations incur the same cost, leaving configuration-dependent cost modeling to future work. We assume the DAG is unknown and learn the CPM from observational data sampled randomly from each domain. When the DAG is known, this observational data collection can be skipped.
\begin{figure*}[!t]
  \centering
  {\includegraphics[width=1.3\columnwidth]{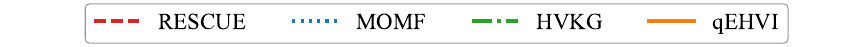}}
  \subfloat[MF Branin-Currin]{\includegraphics[width=.5\columnwidth]{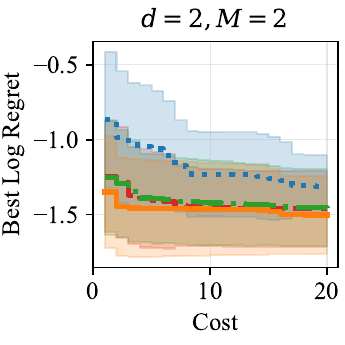}\label{fig:regret_branincurrin}}
  \subfloat[MF Park]{\includegraphics[width=0.5\columnwidth,]{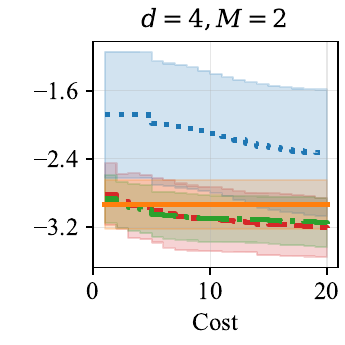}\label{fig:regret_park}}
  \subfloat[XGBoost HPO]{\includegraphics[width=0.5\columnwidth,]{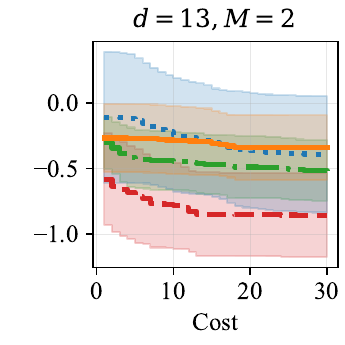}\label{fig:regret_xgboost}}
  \subfloat[Ranger HPO]{\includegraphics[width=.5\columnwidth]{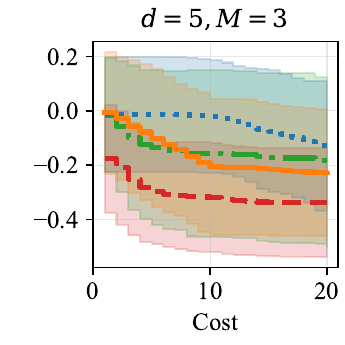}\label{fig:regret_ranger}}
  \vspace{0.1em}
  \subfloat[Robot navigation]{\includegraphics[width=0.5\columnwidth,]{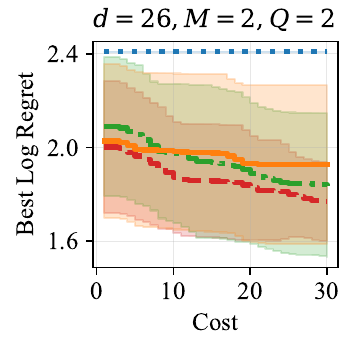}\label{fig:regret_robotnav}}
  \subfloat[Healthcare]{\includegraphics[width=0.5\columnwidth,]{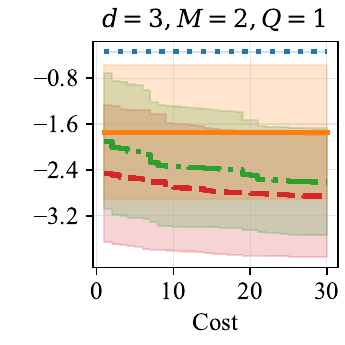}\label{fig:regret_health}}
  \subfloat[Constrained XGBoost HPO]{\includegraphics[width=.5\columnwidth]{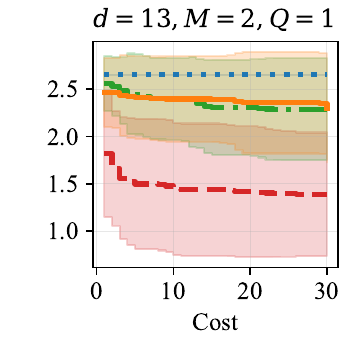}\label{fig:regret_consxgboost}}
  \subfloat[Constrained Ranger HPO]{\includegraphics[width=.5\columnwidth]{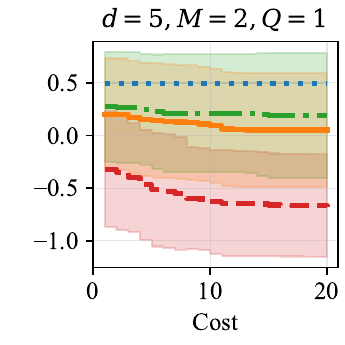}\label{fig:regret_consranger}}
  \caption{Cumulative intervention cost vs. best log HV regret, where “best’’ denotes the minimum regret achieved up to that cost.
}\label{fig:optim_pref}
\end{figure*}

\subsection{Results}
Figure~\ref{fig:optim_pref} shows that \our significantly improves sample efficiency and optimization performance compared to the baselines. In synthetic problems~(Fig.~\ref{fig:regret_branincurrin}, \ref{fig:regret_park}), both HVKG and qEHVI perform strongly and closely track \our, which suggests that in low dimensional and unconstrained settings, even SF MOBO can already be effective. As dimensionality increases and constraints are enforced~(Fig.~\ref{fig:regret_xgboost}--\ref{fig:regret_consranger}), \our exhibits consistently larger gains. HVKG remains competitive on many problems but is dominated by \our on all real-world and constrained ones and is even outperformed by qEHVI on the constrained Ranger HPO problem~(Fig.~\ref{fig:regret_consranger}). MOMF often fails to identify any feasible configurations on the constrained problems, consistent with the acquisition function misalignment reported by \citet{HVKG_ICML2023} where fidelity is treated as an additional objective. Overall, \our delivers consistent performance across the low and high dimensional, constrained and unconstrained problems. We attribute these gains to the causal prior encoded in the MF-CGP, which captures the casual relationships between and across configurations, fidelities, objectives and constraints, and thereby allows \our to systematically exploit cross-fidelity generalization and to focus evaluations on causally promising regions of the configuration space, leading to effective use of the available budget. These results address RQ1 and RQ2 by showing that \our improves sample efficiency under a fixed budget and that its performance remains stable as problems become higher dimensional and constrained.

\textbf{Why does \our work better?~~}
To understand why \our achieves superior optimization performance compared to the baselines, we examine how it uses the budget and handles constraints. Figure~\ref{fig:itrfid_pref} shows that \our uses cheaper fidelities more often than the baselines. HVKG performs on par with \our, as both methods implement similar acquisition functions, but it never surpasses \our. In contrast, qEHVI and MOMF consume the budget much faster with high fidelity queries. This behavior is expected for qEHVI~(a non-MF method) and explains why MOMF performs similarly to qEHVI despite being an MF method. Figure~\ref{fig:vio_pref} shows the constraint violation rate~(violations per iteration). \our achieves the lowest violation rate across constrained problems. MOMF rarely finds feasible configurations, consistent with the acquisition-function misalignment discussed earlier. Surprisingly, qEHVI yields lower violations than HVKG except in Fig.~\ref{fig:vio_health}. By exploiting causal structure, \our systematically selects configuration-fidelity pairs that balance cost and information and avoid infeasible regions, which explains its lower regret in Fig.~\ref{fig:optim_pref} and addresses RQ3.

\begin{figure*}[!t]
  \centering
  \subfloat[MF Branin Currin]{\includegraphics[width=.5\columnwidth]{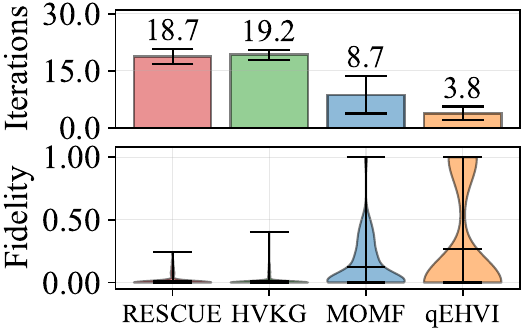}\label{fig:itrfid_branincurrin}}
  \subfloat[MF Park]{\includegraphics[width=0.5\columnwidth,]{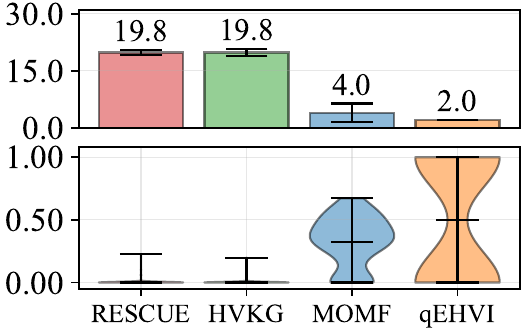}\label{fig:itrfid_park}}
  \subfloat[XGBoost HPO]{\includegraphics[width=0.5\columnwidth,]{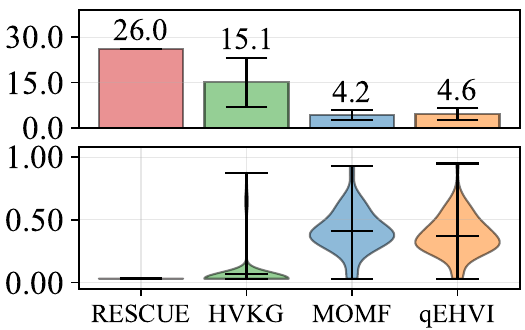}\label{fig:itrfid_xgboost}}
  \subfloat[Ranger HPO]{\includegraphics[width=.5\columnwidth]{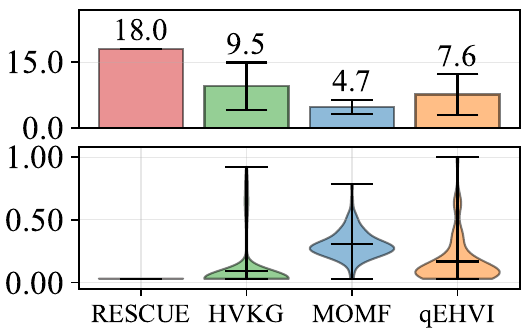}\label{fig:itrfid_ranger}}
  \vspace{0.5em}
  \subfloat[Robot navigation]{\includegraphics[width=0.5\columnwidth,]{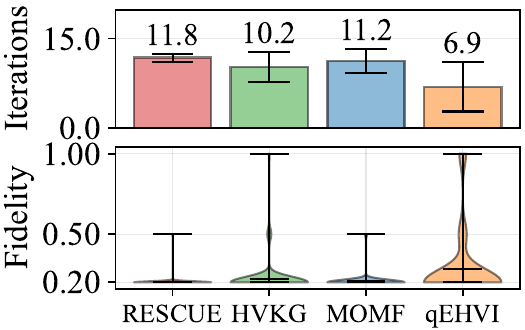}\label{fig:itrfid_robotnav}}
  \subfloat[Healthcare]{\includegraphics[width=0.5\columnwidth,]{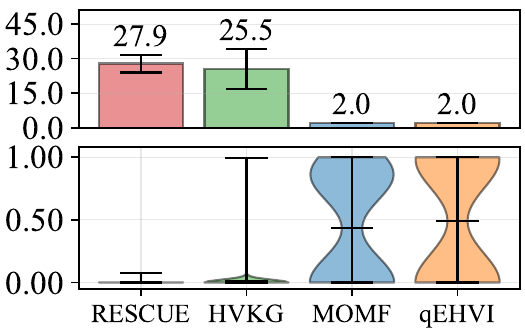}\label{fig:itrfid_health}}
  \subfloat[Constrained XGBoost HPO]{\includegraphics[width=.5\columnwidth]{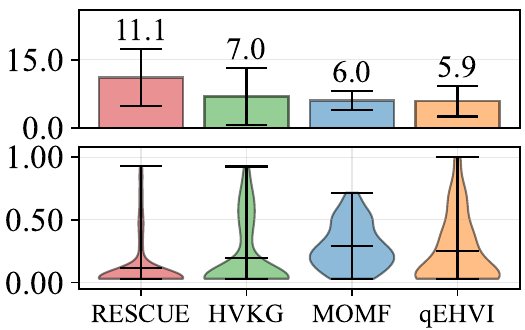}\label{fig:itrfid_consxgboost}}
  \subfloat[Constrained Ranger HPO]{\includegraphics[width=.5\columnwidth]{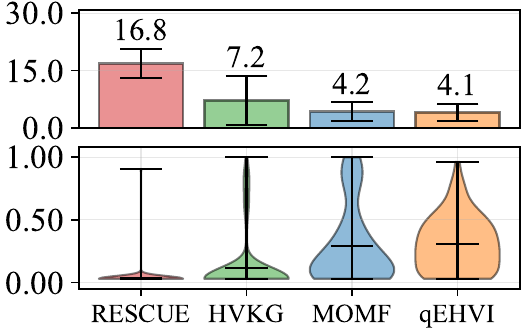}\label{fig:itrfid_consranger}}
  \caption{Fidelity queries and iteration counts under a fixed budget. For each problem, the top panel reports the number of iterations achieved by each method, and the bottom panel shows the empirical distribution of queried fidelities. Each violin shows the fidelity distribution with a vertical min–max range line and a mean marker.}
  \label{fig:itrfid_pref}
\end{figure*}

\begin{figure}[!t]
  \centering
  \subfloat[Robot Navigation]{\includegraphics[width=.5\columnwidth]{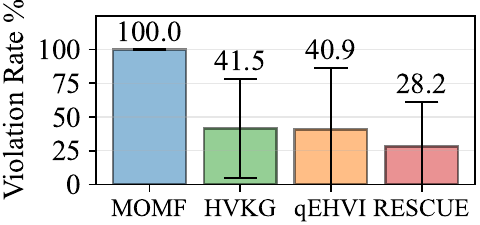}\label{fig:vio_robotnav}}
  \subfloat[Healthcare]{\includegraphics[width=0.5\columnwidth,]{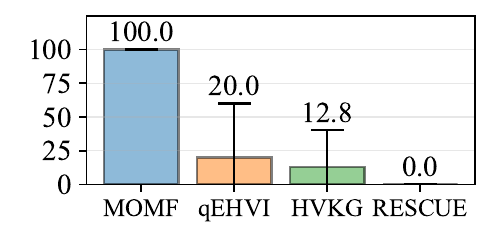}\label{fig:vio_health}}
  \hfill
  \subfloat[Constrained XGBoost HPO]{\includegraphics[width=0.5\columnwidth,]{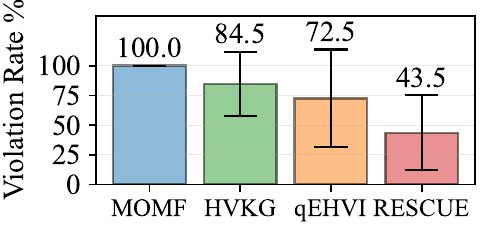}\label{fig:vio_xgboost}}
  \subfloat[Constrained Ranger HPO]{\includegraphics[width=0.5\columnwidth,]{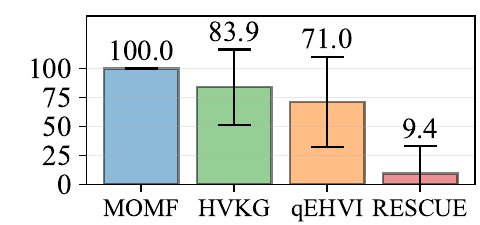}\label{fig:vio_ranger}}
  \caption{Constraint violation rate percentage.}\label{fig:vio_pref}
  \vspace{-1em}
\end{figure}

%% file: Text/4_Discussion.tex
\section{Discussion}

To the best of our knowledge, this is the first work to consider causal calculus in a multi-fidelity BO setting. The proposed design is general and extendable to other problems. To apply \our to a novel problem, the practitioner must identify (i)~configuration options, (ii)~performance metrics, and (iii)~key performance indicators. The abstraction level of the variables in the CPM is chosen by the practitioner and can range from high-level system options down to the hardware level.

The NP-hard complexity of causal discovery~\cite{chickering2004large} implies that the identified causal model may not always represent the ground-truth causal relationships among variables. It is therefore important to recognize potential discrepancies between the discovered causal structure and the actual one. However, such causal models can still be employed to achieve better performance compared to ML-based approaches in systems optimization~\cite{dubslaff2022causality} and debugging tasks~\cite{hossen_care2023}, because causal models aim to avoid capturing spurious correlations~\cite{glymour2019review}.

The CPM shapes both the GP prior mean and the covariance, introducing a strong inductive bias. When this prior is misspecified or overly confident, it can misguide the GP posterior toward incorrect interventional effects and cause an ill-conditioned covariance that requires additional regularization~(\eg added diagonal jitter). In practice, the causal prior should \emph{guide} rather than \emph{dominate}, and the GP must retain sufficient flexibility in its hyperparameters to correct or override erroneous causal information.

Our implementation relies on DoWhy~\cite{dowhy_jmlr2024} for causal inference. To enable PyTorch~\cite{paszke2019pytorch} integration and GPU acceleration, we train a neural-network proxy in PyTorch on interventional samples generated from the CPM. This approximation can introduce additional bias if the proxy does not faithfully represent the underlying causal mechanisms. Directly calling DoWhy at each optimization step would avoid this proxy, but was empirically too slow in our setting due to the lack of GPU support.

An immediate extension of \our is to use causal effect estimates to adaptively reduce the configuration space, by identifying options with negligible causal effect on the performance objectives and focusing the search on causally relevant configuration options, which may further improve sample efficiency in high dimensional problems~\cite{hossen2025cure}.








%% file: Text/6_Conclusions.tex
\section{Conclusion}
We present \our, a multi-fidelity, Multi-objective Bayesian Optimization~(MOBO) method that integrates causal calculus into the multi-fidelity setting. \our learns a causal performance model~(CPM) that captures causal relations between configuration options, fidelities, objectives, and constraints. The CPM parameterizes the prior mean and covariance of a multi-output multi-fidelity GP over objectives and constraints. Building on this model, we designed and implemented a multi-fidelity causal hypervolume knowledge-gradient acquisition strategy that balances cost-information trade-off. Our theoretical analysis shows that \our's cumulative hypervolume regret can be bounded to its full-fidelity~(target only) MOBO analog, and that this bound holds even when auxiliary sources are unreliable. Empirically, across synthetic and real-world problems in robotics, machine learning, and healthcare, \our systematically selects configuration-fidelity pair that balance cost and information gain, leading to lower hypervolume regret and fewer constraint violations than baseline methods. These results indicate that integrating causal calculus into multi-fidelity MOBO is a promising direction for more sample-efficient optimization with improved constraint handling and causal interpretability in complex systems.

%% file: Text/acknowledgements.tex
\section*{Acknowledgments}
This work was partially supported by the National Science Foundation~(Awards 2106853, 2007202, 2107463, 2038080, and 2233873).

%% file: Text/impact_statement.tex
\section*{Impact Statement}

This paper presents work whose goal is to advance the field of Machine Learning. There are many potential societal consequences of our work, none which we feel must be specifically highlighted here.

%% file: Text/7_Appendix.tex
\section*{Notations}
\begin{table}[H]
\centering
\caption{Mathematical notations}
\begin{tblr}{
  hline{1,23} = {-}{0.08em},
  hline{2} = {-}{},
}
\textbf{Description} & \textbf{Notation}\\
Configuration space & $\mathcal{X}$\\
Feasible configuration space & $\bar{\mathcal{X}} \subseteq \mathcal{X}$\\
Fidelity space & $\fidelityspace$\\
Configuration & $\configvec \in \mathcal{X}$\\
Configuration-fidelity pair & $\configfidelity := (\configvec,\fidelity)$\\
Fidelity  & $\fidelity \in \fidelityspace$\\
Target fidelity & $\targetfidelity \in \fidelityspace$\\
Number of configuration options & $d$\\
Number of objectives & $M$\\
Number of constraints & $Q$\\
Objective value & $\objvalues$\\
Constraint thresholds & $\gamma_q, q \in Q$\\
Total budget & $\budget$\\
Total number of iterations & $T$\\
Pareto frontier at target fidelity & $\pareto$\\
Directed acyclic graph~(DAG) & $\cg$\\
Causal performance model~(CPM) & $\cm$\\
Observational dataset & $\obsdata$\\
Interventional dataset & $\interdata$\\
CPM error & $\errCPM$\\
Lipschitz constant & $\lipschitz$\\
\end{tblr}
\end{table}

\input{Text/7.2_Appendix_proofs}
\input{Text/7.1_Appendix_experiments}

%% file: Text/7.2_Appendix_proofs.tex
\section{Proofs}
\label{app:proofs}

\subsection{Proof of Lemma~\ref{lem:concentration}}
The inclusion of Lemma~\ref{lem:concentration} is critical for establishing the theoretical validity of using \emph{estimated} causal priors within a Gaussian Process bandit framework. Standard regret analyses typically rely on a strong ``realizable'' assumption that the true objective function lies exactly within the Reproducing Kernel Hilbert Space (RKHS) associated with the GP prior. In RESCUE, however, the causal prior is derived from observational data or simplified physical models, and is therefore an \emph{approximation} rather than a perfect oracle. By adopting the \emph{agnostic} GP setting, Lemma~\ref{lem:concentration} rigorously quantifies how the causal misspecification error $\errCPM$ propagates into the posterior predictions.

Moreover, this result provides the key guarantee underpinning the exploration--exploitation behavior of the algorithm under model mismatch. As long as the causal prior is a reasonable approximation~(i.e., $\errCPM$ is bounded), the true objective function remains contained within the inflated confidence intervals with high probability. This directly connects our causal inference component to the robust GP bandit literature~(\eg \citet{bogunovic2021misspecified}), and shows that RESCUE does not require a perfectly specified causal graph to succeed, but instead degrades gracefully as a function of the causal estimation error.

\begin{proof}
\label{proof:concentration}
We work in the standard ``agnostic'' or misspecified GP setting. By Assumption~\ref{assump:causal-agnostic}, there exists an RKHS function
$\ckdo$ with $\|\ckdo(\cdot,\fidelity)\|_{\mathcal{H}_k}\le B$ for all $\fidelity$ and such that
\[
  \objfunc(\configvec,\fidelity)
  \;=\; \ckdo(\configvec,\fidelity) + \delta(\configvec,\fidelity),
  \qquad
  \sup_{(\configvec,\fidelity)} \|\delta(\configvec,\fidelity)\| \le \errCPM.
\]

The observations collected by the optimizer are
\[
  y_t
  \;=\;
  \objfunc(\configvec_t,\fidelity_t) + \xi_t
  \;=\;
  \ckdo(\configvec_t,\fidelity_t) + \big(\delta(\configvec_t,\fidelity_t) + \xi_t\big),
\]
where $\{\xi_t\}$ is the observation noise process from Assumption~\ref{assump:gp}. Thus, from the perspective of GP regression on
the latent function $\ckdo$, the effective noise is $\tilde{\xi}_t := \delta(\configvec_t,\fidelity_t) + \xi_t$. Since $\delta$ is uniformly bounded and $\xi_t$ is sub-Gaussian, $\tilde{\xi}_t$ remains sub-Gaussian~(with shifted mean) and satisfies the tail assumptions required for the concentration results cited below. The MF-CGP posterior mean $\meanvec_t$ and variance $\sigmavec_t$ are then the usual kernel ridge regression estimators based on these observations and the kernel in Assumption~\ref{assump:gp}.

Under the RKHS norm bound on $\ckdo$ and the uniform control $\|\delta(\configvec,s)\|\le \errCPM$, results from the misspecified GP bandit literature (see, e.g., Lemma~3 in \citet{chowdhury2017kernelized} and the EC-GP-UCB analysis of \citet{bogunovic2021misspecified}; cf.\ also \citet{Srinivas2010GaussianPO} for the realizable case) show that the misspecification error propagates as an additive inflation of the usual GP confidence width. Concretely, there exists a sequence $\{\beta_t\}_{t\ge 1}$
such that, for any $\rho\in(0,1)$, with probability at least $1-\rho$, for all $\configvec\in\bar{\mathcal{X}}$ and all $t\ge 1$,
\begin{equation}
\label{eq:agnostic-bound}
  \big\|\objfunc(\configvec,\targetfidelity)
        - \meanvec_t(\configvec,\targetfidelity)\big\|
  \;\le\;
  \beta_t^{1/2}\,\big\|\sigmavec_t(\configvec,\targetfidelity)\big\|
  \;+\; \errCPM.
\end{equation}

Intuitively, the term $\beta_t^{1/2}\|\sigmavec_t\|$ is the standard GP confidence width that would arise if $\objfunc$ itself lay in the RKHS~(the realizable case), while the uniform misspecification error $\errCPM$ accounts for the irreducible approximation error $\delta$. \eq~\eqref{eq:agnostic-bound} is precisely the claimed inequality, which completes the proof.
\end{proof}

\begin{remark}
The key difference from the realizable analysis of \citet{Srinivas2010GaussianPO} is that the MF-CGP posterior $\meanvec_t$ now  concentrates around the true objective $\objfunc$ only up to an additive bias controlled by $\sup_{(\configvec,s)}\|\objfunc(\configvec,s) - \ckdo(\configvec,s)\|$. Lemma~\ref{lem:concentration} makes this bias explicit via the term $\errCPM$, yielding an ``enlarged'' confidence tube in the spirit of the EC-GP-UCB algorithm of~\citet{bogunovic2021misspecified}.
\end{remark}

\subsection{Proof of Lemma~\ref{lem:hvbound}}

\begin{proof}
\label{proof:hvbound}
Let
\[
  A
  := \{\objfunc(\configvec,\targetfidelity) : \configvec \in \bar{\mathcal{X}}\},
  \qquad
  B_t
  := \{\meanvec_t(\configvec,\targetfidelity) : \configvec \in \bar{\mathcal{X}}\}.
\]
By definition of the hypervolume indicator, dominated points do not contribute, so
\[
  \hv^* = \hv(\pareto^*) = \hv(A), \qquad
  \hv(\pareto_t) = \hv(B_t).
\]
Assumption~\ref{assump:hvlip} states that the hypervolume indicator is Lipschitz-continuous with respect to perturbations of the objective values. In particular, there exists $\lipschitz > 0$ such that
\[
  \bigl|\hv(A) - \hv(B_t)\bigr|
  \;\le\;
  \lipschitz \cdot
  \sup_{\configvec \in \bar{\mathcal{X}}}
  \bigl\|\objfunc(\configvec,\targetfidelity)
        - \meanvec_t(\configvec,\targetfidelity)\bigr\|.
\]
Since $\hv^* = \hv(A)$ and $\hv(\pareto_t) = \hv(B_t)$, this yields
\[
  \bigl|\hv^* - \hv(\pareto_t)\bigr|
  \;\le\;
  \lipschitz \cdot
  \max_{\configvec \in \bar{\mathcal{X}}}
  \bigl\|\objfunc(\configvec,\targetfidelity)
        - \meanvec_t(\configvec,\targetfidelity)\bigr\|.
\]
Finally, for any real number $x$ we have $x \le |x|$, hence
\[
  \hv^* - \hv(\pareto_t)
  \;\le\;
  \bigl|\hv^* - \hv(\pareto_t)\bigr|
  \;\le\;
  \lipschitz \cdot
  \max_{\configvec \in \bar{\mathcal{X}}}
  \bigl\|\objfunc(\configvec,\targetfidelity)
        - \meanvec_t(\configvec,\targetfidelity)\bigr\|,
\]
which completes the proof.
\end{proof}

\subsection{Proof of Lemma~\ref{lem:onestep}}

\begin{proof}
\label{proof:onestep}
By definition of the C-HVKG selection rule, $(\configvec_t, \fidelity_t)$ maximizes the cost-normalized acquisition function. Equivalently, it maximizes the cost-weighted value
$\cost(\configvec,s)\,\acqfunc(\configvec,s)$, so for any $\configvec$,
\begin{equation}
\label{eq:optimality}
  \cost(\configvec_t, \fidelity_t)\,\acqfunc(\configvec_t, \fidelity_t)
  \;\ge\;
  \cost(\configvec, \targetfidelity)\,\acqfunc(\configvec, \targetfidelity).
\end{equation}
In particular, taking $\configvec = \configvec_t$ yields
\[
  \cost(\configvec_t, \fidelity_t)\,\acqfunc(\configvec_t, \fidelity_t)
  \;\ge\;
  \cost(\configvec_t, \targetfidelity)\,\acqfunc(\configvec_t, \targetfidelity).
\]

Next, we relate $\acqfunc(\configvec_t, \targetfidelity)$ to the true expected improvement $\Delta_t^{\text{SF}}(\configvec_t)$. The quantity $\acqfunc(\configvec_t, \targetfidelity)$ is the expected hypervolume improvement \emph{per unit cost} computed under the MF-CGP predictive distribution; hence $\cost(\configvec_t,\targetfidelity)\,\acqfunc(\configvec_t,\targetfidelity)$ is the model-based expected hypervolume improvement. In contrast, $\Delta_t^{\text{SF}}(\configvec_t)$ is the corresponding improvement under the true objective $\objfunc$ at $\targetfidelity$.

Applying the hypervolume approximation bound (Lemma~\ref{lem:hvbound}) to the pre- and post-update Pareto fronts induced by $\meanvec_t$ and by $\objfunc$, we obtain
\[
  \bigl|
    \cost(\configvec_t,\targetfidelity)\,\acqfunc(\configvec_t,\targetfidelity)
    - \Delta_t^{\text{SF}}(\configvec_t)
  \bigr|
  \;\le\;
  \lipschitz \cdot
  \max_{\configvec \in \bar{\mathcal{X}}}
  \bigl\|\objfunc(\configvec,\targetfidelity)
        - \meanvec_t(\configvec,\targetfidelity)\bigr\|.
\]
Using the general inequality $A \ge B - |A-B|$ with $A = \cost(\configvec_t,\targetfidelity)\,\acqfunc(\configvec_t,\targetfidelity)$ and $B = \Delta_t^{\text{SF}}(\configvec_t)$, we deduce
\[
  \cost(\configvec_t,\targetfidelity)\,\acqfunc(\configvec_t,\targetfidelity)
  \;\ge\;
  \Delta_t^{\text{SF}}(\configvec_t)
  \;-\;
  \lipschitz \cdot
  \max_{\configvec \in \bar{\mathcal{X}}}
  \bigl\|\objfunc(\configvec,\targetfidelity)
        - \meanvec_t(\configvec,\targetfidelity)\bigr\|.
\]

Finally, substituting the misspecified MF-CGP confidence bound from Lemma~\ref{lem:concentration}, which guarantees
\[
  \bigl\|\objfunc(\configvec,\targetfidelity)
        - \meanvec_t(\configvec,\targetfidelity)\bigr\|
  \;\le\;
  \beta_t^{1/2}\|\sigmavec_t(\configvec,\targetfidelity)\|
  + \errCPM
  \qquad \forall\,\configvec\in\bar{\mathcal{X}},
\]
and combining this with \eq~\eqref{eq:optimality} (with $\configvec=\configvec_t$) yields the inequality in \eqref{eq:onestep-bound}, completing the proof.
\end{proof}

\subsection{Proof of Theorem~\ref{thm:main}}

\begin{proof}
\label{proof:main}
For each iteration $t$, Lemma~\ref{lem:hvbound} (hypervolume approximation bound) applied to the RESCUE surrogate implies
\[
  \hv^* - \hv(\pareto_t)
  \;\le\;
  \lipschitz \cdot
  \max_{\configvec\in\bar{\mathcal{X}}}
  \bigl\|
    \objfunc(\configvec,\targetfidelity)
    - \meanvec_t(\configvec,\targetfidelity)
  \bigr\|.
\]
Under Assumptions~\ref{assump:gp} and~\ref{assump:causal-agnostic}, the misspecified MF-CGP confidence bound of Lemma~\ref{lem:concentration} guarantees that, for any $\rho\in(0,1)$, there exists a sequence $\{\beta_t\}_{t\ge 1}$ such that, with probability at least $1-\rho$,
\[
  \bigl\|
    \objfunc(\configvec,\targetfidelity)
    - \meanvec_t(\configvec,\targetfidelity)
  \bigr\|
  \;\le\;
  \beta_t^{1/2}
  \bigl\|\sigmavec_t(\configvec,\targetfidelity)\bigr\|
  + \errCPM
  \qquad
  \forall\,\configvec\in\bar{\mathcal{X}},\ \forall\,t\ge 1.
\]
Combining the two inequalities and maximizing over $\configvec\in\bar{\mathcal{X}}$ yields, on the same high-probability event,
\[
  \hv^* - \hv(\pareto_t)
  \;\le\;
  \lipschitz
  \left(
    \beta_t^{1/2}
    \max_{\configvec\in\bar{\mathcal{X}}}
      \|\sigmavec_t(\configvec,\targetfidelity)\|
    + \errCPM
  \right),
  \qquad t = 1,\dots,T.
\]
Summing over $t=1,\dots,T$ gives
\[
  \sum_{t=1}^{T} \bigl(\hv^* - \hv(\pareto_t)\bigr)
  \;\le\;
  \lipschitz
  \sum_{t=1}^{T}
    \left(
      \beta_t^{1/2}
      \max_{\configvec\in\bar{\mathcal{X}}}
        \|\sigmavec_t(\configvec,\targetfidelity)\|
      + \errCPM
    \right),
\]
which is exactly \eq~\eqref{eq:main}. This establishes the claimed high-probability regret bound for RESCUE.
\end{proof}

\subsection{Proof of Corollary~\ref{cor:unreliable}}

\begin{proof}
\label{proof:corollary}
By Theorem~\ref{thm:main}, under Assumptions~\ref{assump:gp}, \ref{assump:causal-agnostic}, and~\ref{assump:hvlip}, we have, with probability
at least $1-\rho$,
\[
  \sum_{t=1}^{T} \bigl(\hv^* - \hv(\pareto_t)\bigr)
  \;\le\;
  \lipschitz
  \sum_{t=1}^{T}
    \left(
      \beta_t^{1/2}
      \max_{\configvec\in\bar{\mathcal{X}}}
        \|\sigmavec_t(\configvec,\targetfidelity)\|
      + \errCPM
    \right).
\]
By construction of the MF-CGP and Assumption~\ref{assump:causal-agnostic}, the term $\errCPM$ is defined as a uniform bound
\[
  \sup_{(\configvec,s)\in\mathcal{X}\times\mathcal{S}}
    \bigl\|\objfunc(\configvec,s) - \ckdo(\configvec,s)\bigr\|
  \;\le\; \errCPM,
\]
where $\ckdo$ is the interventional causal prior learned from observational data $\obsdata$. The key point is that $\ckdo(\configvec,\targetfidelity)$ is identified from the \emph{target-fidelity intervention} $\do(\fidelity=\targetfidelity)$ and does not require modeling, or relying on the auxiliary fidelities $\fidelity\neq\targetfidelity$. Thus, under the stated causal assumptions, the approximation error $\errCPM$ depends only on CPM learning error~(\eg finite-sample error in $\obsdata$ and functional approximation error) and is independent of the cross-fidelity bias $\boldsymbol{\delta}(\configvec,\fidelity)$. The remaining term in the regret bound,
\[
  \sum_{t=1}^{T}
    \beta_t^{1/2} \max_{x \in \bar{\mathcal{X}}}
      \|\sigma_t(x, s^\diamond)\|,
\]
depends on the GP posterior variance at the target fidelity, which in standard GP regression is independent of the observed function values and determined only by the kernel, query locations, and noise level. Hence, although the magnitude of the true cross-fidelity bias $\delta(x,s)$ may affect the observed auxiliary outputs, it does not affect $\sigma_t(x,s^\diamond)$ and therefore does not enter the bound.

Consequently, for any family of auxiliary sources where $\sup_{\configvec,\fidelity\neq\targetfidelity}\|\boldsymbol{\delta}(\configvec,\fidelity)\| \to\infty$, the right-hand side of the inequality in Theorem~\ref{thm:main} remains unchanged. In particular, the same constants $\lipschitz$, $\{\beta_t\}$, and $\errCPM$ apply, and the bound does not blow up as $\|\boldsymbol{\delta}\|\to\infty$. This establishes the claimed robustness to unreliable auxiliary sources.
\end{proof}

\subsection{Discussion of Theorem~\ref{thm:main}}

Theorem~\ref{thm:main} establishes that RESCUE's cumulative hypervolume regret is controlled by two interpretable terms: (i)~the GP posterior uncertainty at the target fidelity, and (ii)~the causal misspecification error $\errCPM$.

\paragraph{Role of causal accuracy.} When the CPM learned from observational data $\obsdata$ is accurate, the causal prior computed via do-calculus has small error ($\errCPM$ small), and RESCUE's regret is dominated by the GP uncertainty term, which decreases with data accumulation. In this case, RESCUE can perform more iterations via cheaper lower-fidelity evaluations under the same budget constraint. This additional exploration opportunity can be beneficial when sources are informative, though the benefit depends on the actual fidelity choices made by the C-HVKG acquisition function. Conversely, when observational data is limited or the causal structure is complex (leading to large $\errCPM$), the bound ensures controlled degradation, preventing unbounded regret accumulation.

\paragraph{Independence from cross-fidelity bias.} A key insight from Corollary~\ref{cor:unreliable} is that the regret bound does not depend on the cross-fidelity bias $\boldsymbol{\delta}(\configvec,\fidelity)$. Since the causal prior $\ckdo(\configvec,\targetfidelity)$ is computed using do-calculus rather than by extrapolating across fidelities, $\errCPM$ depends only on the quality of $\obsdata$, not on how poorly auxiliary sources approximate the target. This ensures robustness, even when simulators or cheaper proxies are misaligned with the target system. The worst-case regret bound remains controlled by $\errCPM$ and GP uncertainty, not by simulation-to-reality gaps.

%% file: Text/7.1_Appendix_experiments.tex
\section{Experiment Details}
\label{app:exp_details}

\subsection{Cost-Aware Initial Sampling}
\label{app:init_sampling}
To initialize $\interdata_0 = \{(\configvec_i, \fidelity_i, \objvalues_i)\}_{i=1}^N$ (line 1 of Algorithm~\ref{alg:rescue}), we sample configurations and fidelities using Algorithm~\ref{alg:initial_sampling}, where the inverse CDF is:
\begin{equation}
    \operatorname{CDF}(\fidelity) = \begin{cases}
    \displaystyle\frac{\int_0^{\fidelity} \cost(\configvec, \tilde{\fidelity})^{-1} d\tilde{\fidelity}}{\int_0^1 \cost(\configvec, \tilde{\fidelity})^{-1} d\tilde{\fidelity}} & \text{(continuous fidelities)} \\[1em]
    \displaystyle\sum_{j=1}^J \frac{\cost(\configvec, \fidelity_j)^{-1}}{\sum_{i=1}^S \cost(\configvec, \fidelity_i)^{-1}} & \text{(discrete fidelities)},
\end{cases}
\label{eq:cdf_sampling}
\end{equation}
and $p(\fidelity) \propto \cost(\configvec, \fidelity)^{-1}$, so that lower-cost fidelities are sampled more frequently, increasing sample diversity under the initialization budget. We assume that for a fixed $\fidelity$, all $\configvec \in \mathcal{X}$ incur the same cost, leaving configuration-dependent cost modeling to future work. 

\begin{algorithm}[t]
\caption{Cost-Aware Initial Sampling}
\label{alg:initial_sampling}
\begin{algorithmic}[1]
\REQUIRE Sampling budget $\budget_{\operatorname{init}}$, cost function $\cost(\cdot, \cdot)$, objective function $f(\cdot, \cdot)$
\ENSURE Initial dataset $\interdata_0 = \{(\configvec_i, \fidelity_i, \objvalues_i)\}_{i=1}^{N}$
\STATE Initialize $\interdata_0 = \emptyset$, $C_0 = 0$, $i = 0$
\WHILE{$C_i < \budget_{\text{init}}$}
    \STATE $i \gets i + 1$
    \STATE Sample configuration $\configvec_i \sim \mathcal{U}(\mathcal{X})$
    \STATE Sample fidelity $\fidelity_i = \operatorname{CDF}^{-1}(u)$ using \eq~\eqref{eq:cdf_sampling}, where $u \sim \mathcal{U}(0,1)$
    \IF{$C_{t-1} + \cost(\configvec_i, \fidelity_t) \leq \budget_{\text{init}}$}
        \STATE Evaluate objectives $\objvalues_i = f(\configvec_t, \fidelity_t)$
        \STATE $\interdata_{0:i} \gets \interdata_{0:i-1} \cup \{(\configvec_i, \fidelity_i, \objvalues_i)\}$
        \STATE $C_i \gets C_{i-1} + \cost(\configvec_i, \fidelity_i)$
    \ENDIF
\ENDWHILE
\STATE \textbf{return} $\interdata_0$
\end{algorithmic}
\end{algorithm}

\subsection{Synthetic problems}

\paragraph{Branin-Currin}
We use the modified multi-fidelity, multi-objective Branin-Currin function from \citet{MOMF_2023}. The configuration space is given in Table~\ref{tab:bc_configspace}, and the discovered DAG is shown in Fig.~\ref{fig:dag_bc}.

\paragraph{Park}
We use the modified multi-fidelity, multi-objective Park function from \citet{MOMF_2023}. The configuration space is given in Table~\ref{tab:park_configspace}, and the discovered DAG is shown in Fig.~\ref{fig:dag_park}.

\subsection{Real-world problems}
\label{app:exp_setting_realworld}

\paragraph{XGBoost HPO}
We use the YAHPO Gym~\cite{yahpo_AISTATS2022} multi-fidelity, multi-objective HPO benchmark~(Scenario: iaml\_xgboost, Instance: $41146$). The objectives are mean misclassification error~(mmce) and memory footprint of the model~(rammodel), both to be minimized. The configuration space is given in Table~\ref{tab:XGBoost_configspace} and the discovered DAG is shown in Fig.~\ref{fig:dag_xgboost}.

\paragraph{Ranger HPO}
We use the YAHPO Gym~\cite{yahpo_AISTATS2022} multi-fidelity, multi-objective HPO benchmark~(Scenario: iaml\_ranger, Instance: $1489$). The objectives are mean misclassification error~(mmce), number of features~(nf), and interaction strength of features~(ias), all to be minimized. The configuration space is given in Table~\ref{tab:XGBoost_configspace} and the discovered DAG is shown in Fig.~\ref{fig:dag_xgboost}.

\paragraph{Healthcare}
We cannot directly use the objective functions from~\citet{cbo_Neurips2020} because there is no multi-fidelity variant of this problem. Instead, we modify them to incorporate an additional fidelity input dimension:
\begin{equation*}
\begin{aligned}
    \operatorname{Statin} &= \sigma(S \times (-13.0 + 0.1 \times \operatorname{Age} + 0.2 \times \operatorname{BMI}) 
    \\
    \operatorname{Cancer} &= \sigma(S \times (2.2 - 0.05 \times \operatorname{Age} + 0.01 \times \operatorname{BMI} - 0.04 \times \operatorname{Statin} + 0.2 \times \operatorname{Aspirin}))
    \\
    \operatorname{PSA} &= (S+6.8) (0.04 \times \operatorname{Age} - 0.15 \times \operatorname{BMI} + 0.6 \times \operatorname{Statin} + 0.55 \times \operatorname{Aspirin} + \operatorname{Cancer})
\end{aligned}
\end{equation*}
where $\sigma(\cdot)$ is the sigmoidal transformation defined as $\sigma(x)=\frac{1}{1+\exp(-x)}$, and $\operatorname{Age}=65$. The main difference is the addition of the fidelity parameter $S$, which recovers the original functions when $S = 1$. We aim to minimize Statin and PSA while ensuring $\operatorname{Cancer} < 0.35$. The configuration space is given in Table~\ref{tab:health_configspcae} and the discovered DAG is shown in Fig.~\ref{fig:dag_health}.

\paragraph{Robot navigation}
Following the experimental setup of \citet{hossen2025cure}, we deploy mobile robots in a controlled indoor environment. The robots are operated using ROS~2~\cite{ros2} and the Nav2~\cite{nav2macenski2020marathon2} navigation stack for localization, path planning, and trajectory tracking. We consider point-to-point navigation tasks and record navigation performance across fidelity levels. Specifically, we measure (i)~$\mathrm{Energy \ per \ meter}=\frac{\mathrm{Energy}_{\operatorname{total}}}{\mathrm{Distance \ tranveled}}$, (ii)~$\mathrm{Task \ completion \ time}$, (iii)~$\mathrm{Task \ completion \ rate} = \frac{\sum \operatorname{Tasks}_{\operatorname{complted}}}{\sum \operatorname{Tasks}}$, and (iv)~a physics-based $\mathrm{Collision}$-$\mathrm{risk \ score}$ defined as follows:
\begin{equation*}
    \operatorname{Collision-risk \ score} = 10(0.2 \times r_{\operatorname{speed}}+0.3 \times r_{\operatorname{safety}} + 0.2 \times r_{\operatorname{raction}} + 0.1 \times r_{\operatorname{perception}} + 0.15 \times r_{\operatorname{goal}} + 0.05 \times r_{\operatorname{sampling}}),
\end{equation*}
where 
\begin{equation*}
\begin{aligned}
    r_{\operatorname{speed}} &= \operatorname{clip}\left(\frac{\operatorname{speed\_decel\_ratio}}{2.0}, 0, 1\right), \ \text{with} \ \operatorname{speed\_decel\_ratio} = \frac{\operatorname{max\_vel\_x}}{|\operatorname{decel\_lim\_x}| + 0.01},
    \\
    r_{\operatorname{safety}} &= \frac{1}{1 + \operatorname{safety\_margin}}, \ \text{with} \ \operatorname{safety\_margin} = \frac{\operatorname{local\_inflation\_radius}}{\operatorname{max\_vel\_x} + 0.01},
    \\
    r_{\operatorname{reaction}} &= \frac{1}{1 + \operatorname{reaction\_margin}}, \ \text{with} \ \operatorname{reaction\_margin} = \frac{\operatorname{local\_inflation\_radius}}{(\operatorname{max\_vel\_x} \times \operatorname{sim\_time}) + 0.01},
    \\
    r_{\operatorname{perception}} &= \operatorname{clip}\left(\frac{\operatorname{perception\_risk}}{0.5}, 0, 1\right), \ \text{with} \ \operatorname{perception\_risk} = \frac{\operatorname{local\_resolution}}{\operatorname{local\_inflation\_radius} + 0.01},
    \\
    r_{\operatorname{goal}} &= \operatorname{clip}\left(\frac{\operatorname{goal\_vs\_obstacle\_ratio}}{3.0}, 0, 1\right), \ \text{with} \ \operatorname{goal\_vs\_obstacle\_ratio} = \frac{\operatorname{GoalAlign.scale} + \operatorname{GoalDist.scale}}{\operatorname{BaseObstacle.scale} + 0.01},
    \\
    r_{\operatorname{sampling}} &= \frac{1}{1 + \frac{\operatorname{sampling\_density}}{200}}, \ \text{with} \ \operatorname{sampling\_density} = \operatorname{vx\_samples} \times \operatorname{vtheta\_samples}.
\end{aligned}
\end{equation*}
Note that $\operatorname{clip}(x,0,1):=\max(0, \min(x,1))$. The configuration space is given in Table~\ref{tab:robotnav_configspcae} and the discovered DAG is shown in Fig.~\ref{fig:dag_robotnav}.

\paragraph{Constrained XGBoost HPO}
We use the same setup as the XGBoost HPO problem with the goal of maximizing F1 score and minimizing training time~(timetrain), while enforcing that the memory usage during training~(ramtrain) is less than $15 \ \mathrm{MB}$. The configuration space is given in Table~\ref{tab:XGBoost_configspace} and the discovered DAG is shown in Fig.~\ref{fig:dag_xgboostcons}.

\paragraph{Constrained Ranger HPO}
We use the same setup as the Ranger HPO problem with the goal of maximizing AUC~(auc) and minimizing logloss, while enforcing $\mathrm{timetrain} < 1.3$. The configuration space is given in Table~\ref{tab:ranger_configspace} and the discovered DAG is shown in Fig.~\ref{fig:dag_rangertcons}.

\begin{figure*}[!t]
  \centering
  \subfloat[MF Branin-Currin]{\includegraphics[width=.25\columnwidth]{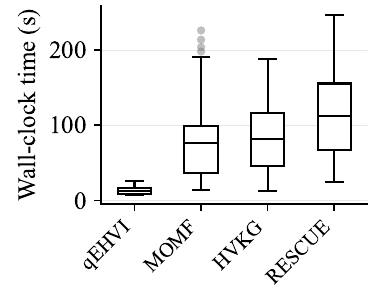}\label{fig:runtime_branincurrin}}
  \subfloat[MF Park]{\includegraphics[width=0.25\columnwidth,]{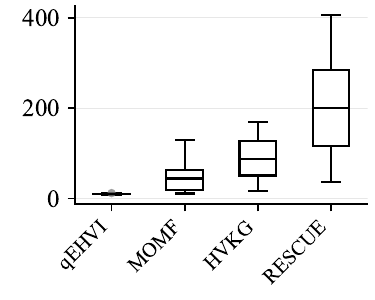}\label{fig:runtime_park}}
  \subfloat[XGBoost HPO]{\includegraphics[width=0.25\columnwidth,]{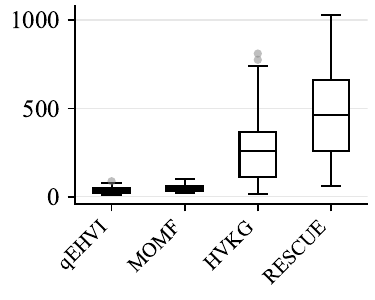}\label{fig:runtime_xgboost}}
  \subfloat[Ranger HPO]{\includegraphics[width=.25\columnwidth]{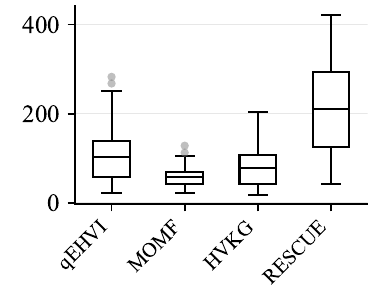}\label{fig:runtime_ranger}}
  \vspace{0.1em}
  \subfloat[Robot navigation]{\includegraphics[width=0.25\columnwidth,]{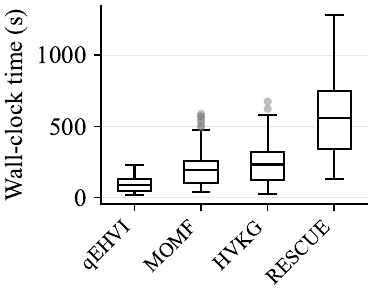}\label{fig:runtime_robotnav}}
  \subfloat[Healthcare]{\includegraphics[width=0.25\columnwidth,]{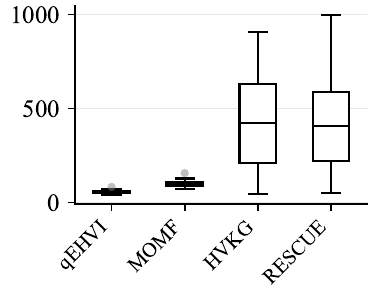}\label{fig:runtime_health}}
  \subfloat[Constrained XGBoost HPO]{\includegraphics[width=.25\columnwidth]{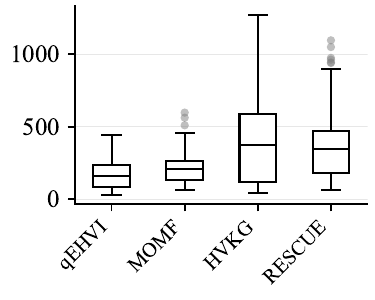}\label{fig:runtime_consxgboost}}
  \subfloat[Constrained Ranger HPO]{\includegraphics[width=.25\columnwidth]{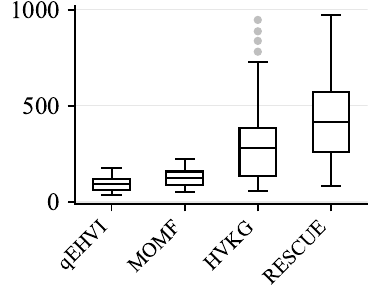}\label{fig:runtime_consranger}}
    \caption{Optimization runtime~(wall-clock time in seconds) for each problem and method. For every problem–method pair, we repeated the experiment with $20$ independent seeds. Each boxplot summarizes the distribution of those $20$ runs. The runtime includes the time required to obtain measurements across fidelities.}
\label{fig:optim_runtime}
\end{figure*}

\begin{table}[t]
	\centering
	{\fontsize{8}{9.6}\selectfont
	\setlength{\tabcolsep}{4pt}
    \caption{Area-under-regret~(AUR) curve comparison between \our and baselines. Blue cells in the $p$-value column denote \colorbox{blue!10}{$p<0.05$}, and green/red cells in the Gain column indicate \colorbox{green!15}{improvements}/\colorbox{red!15}{degradations} of \our over the corresponding baseline.}
	\label{tab:res_stats}
	\begin{tblr}{
			row{1} = {c},
			row{2} = {c},
			cell{1}{1} = {r=2}{},
			cell{1}{2} = {r=2}{},
			cell{1}{3} = {r=2}{},
			cell{1}{4} = {r=2}{},
			cell{1}{5} = {r=2}{},
			cell{1}{6} = {r=2}{},
			cell{1}{7} = {c=2}{},
			cell{3}{1} = {r=3}{},
			cell{3}{3} = {r},
			cell{3}{4} = {r},
			cell{3}{5} = {r},
			cell{3}{6} = {r},
			cell{3}{7} = {r},
			cell{3}{8} = {r=3}{r},
			cell{4}{3} = {r},
			cell{4}{4} = {r},
			cell{4}{5} = {r},
			cell{4}{6} = {r},
			cell{4}{7} = {r},
			cell{5}{3} = {r},
			cell{5}{4} = {r},
			cell{5}{5} = {r},
			cell{5}{6} = {r},
			cell{5}{7} = {r},
			cell{6}{1} = {r=3}{},
			cell{6}{3} = {r},
			cell{6}{4} = {r},
			cell{6}{5} = {r},
			cell{6}{6} = {r},
			cell{6}{7} = {r},
			cell{6}{8} = {r=3}{r},
			cell{7}{3} = {r},
			cell{7}{4} = {r},
			cell{7}{5} = {r},
			cell{7}{6} = {r},
			cell{7}{7} = {r},
			cell{8}{3} = {r},
			cell{8}{4} = {r},
			cell{8}{5} = {r},
			cell{8}{6} = {r},
			cell{8}{7} = {r},
			cell{9}{1} = {r=3}{},
			cell{9}{3} = {r},
			cell{9}{4} = {r},
			cell{9}{5} = {r},
			cell{9}{6} = {r},
			cell{9}{7} = {r},
			cell{9}{8} = {r=3}{r},
			cell{10}{3} = {r},
			cell{10}{4} = {r},
			cell{10}{5} = {r},
			cell{10}{6} = {r},
			cell{10}{7} = {r},
			cell{11}{3} = {r},
			cell{11}{4} = {r},
			cell{11}{5} = {r},
			cell{11}{6} = {r},
			cell{11}{7} = {r},
			cell{12}{1} = {r=3}{},
			cell{12}{3} = {r},
			cell{12}{4} = {r},
			cell{12}{5} = {r},
			cell{12}{6} = {r},
			cell{12}{7} = {r},
			cell{12}{8} = {r=3}{r},
			cell{13}{3} = {r},
			cell{13}{4} = {r},
			cell{13}{5} = {r},
			cell{13}{6} = {r},
			cell{13}{7} = {r},
			cell{14}{3} = {r},
			cell{14}{4} = {r},
			cell{14}{5} = {r},
			cell{14}{6} = {r},
			cell{14}{7} = {r},
			cell{15}{1} = {r=3}{},
			cell{15}{3} = {r},
			cell{15}{4} = {r},
			cell{15}{5} = {r},
			cell{15}{6} = {r},
			cell{15}{7} = {r},
			cell{15}{8} = {r=3}{r},
			cell{16}{3} = {r},
			cell{16}{4} = {r},
			cell{16}{5} = {r},
			cell{16}{6} = {r},
			cell{16}{7} = {r},
			cell{17}{3} = {r},
			cell{17}{4} = {r},
			cell{17}{5} = {r},
			cell{17}{6} = {r},
			cell{17}{7} = {r},
			cell{18}{1} = {r=3}{},
			cell{18}{3} = {r},
			cell{18}{4} = {r},
			cell{18}{5} = {r},
			cell{18}{6} = {r},
			cell{18}{7} = {r},
			cell{18}{8} = {r=3}{r},
			cell{19}{3} = {r},
			cell{19}{4} = {r},
			cell{19}{5} = {r},
			cell{19}{6} = {r},
			cell{19}{7} = {r},
			cell{20}{3} = {r},
			cell{20}{4} = {r},
			cell{20}{5} = {r},
			cell{20}{6} = {r},
			cell{20}{7} = {r},
			cell{21}{1} = {r=3}{},
			cell{21}{3} = {r},
			cell{21}{4} = {r},
			cell{21}{5} = {r},
			cell{21}{6} = {r},
			cell{21}{7} = {r},
			cell{21}{8} = {r=3}{r},
			cell{22}{3} = {r},
			cell{22}{4} = {r},
			cell{22}{5} = {r},
			cell{22}{6} = {r},
			cell{22}{7} = {r},
			cell{23}{3} = {r},
			cell{23}{4} = {r},
			cell{23}{5} = {r},
			cell{23}{6} = {r},
			cell{23}{7} = {r},
			cell{24}{1} = {r=3}{},
			cell{24}{3} = {r},
			cell{24}{4} = {r},
			cell{24}{5} = {r},
			cell{24}{6} = {r},
			cell{24}{7} = {r},
			cell{24}{8} = {r=3}{r},
			cell{25}{3} = {r},
			cell{25}{4} = {r},
			cell{25}{5} = {r},
			cell{25}{6} = {r},
			cell{25}{7} = {r},
			cell{26}{3} = {r},
			cell{26}{4} = {r},
			cell{26}{5} = {r},
			cell{26}{6} = {r},
			cell{26}{7} = {r},
			hline{1,27} = {-}{0.08em},
			hline{2} = {7-8}{},
			hline{3,6,9,12,15,18,21,24} = {-}{},
		}
		Problem & Baseline & AUR~$\downarrow$ & $p$-value & $t$-stat. & Cohen's~$d$ & RESCUE & \\
		&  &  &  &  &  & Gain~$\%$ & AUR~$\downarrow$\\
		{Branin Currin \\($d=3, M=2$)} & qEHVI & $-27.92 \pm 5.70$ & $5.50 \mathrm{E}{-}1$ & $0.61$ & $0.14$ & \SetCell{bg=red!15}$-2.74$ & $-27.16 \pm 5.33$ \\
		& MOMF & $-22.59 \pm 5.30$ & \SetCell{bg=blue!10}$9.07 \mathrm{E}{-}4$ & $-4.00$ & $-0.94$ & \SetCell{bg=green!15}$20.23$ & \\
		& HVKG & $-26.83 \pm 5.50$ & $6.20 \mathrm{E}{-}1$ & $-0.50$ & $-0.11$ & \SetCell{bg=green!15}$1.24$ & \\
		{Park \\($d=5, M=2$)} & qEHVI & $-55.71 \pm 5.60$ & $7.93 \mathrm{E}{-}2$ & $-1.85$ & $-0.41$ & \SetCell{bg=green!15}$5.40$ & $-58.72 \pm 6.89$\\
		& MOMF & $-40.24 \pm 13.12$ & \SetCell{bg=blue!10}$5.96 \mathrm{E}{-}5$ & $-5.13$ & $-1.147$ & \SetCell{bg=green!15}$45.91$ & \\
		& HVKG & $-58.45 \pm 5.21$ & $8.33 \mathrm{E}{-}1$ & $-0.21$ & $-0.04$ & \SetCell{bg=green!15}$0.46$ & \\
		{XGBoost HPO \\($d=13, M=2$)} & qEHVI & $-8.93 \pm 7.16$ & \SetCell{bg=blue!10}$1.62 \mathrm{E}{-}5$ & $-5.82$ & $-1.34$ & \SetCell{bg=green!15}$160.99$ & $-23.31 \pm 9.21$\\
		& MOMF & $-2.24 \pm 12.52$ & \SetCell{bg=blue!10}$5.23 \mathrm{E}{-}4$ & $-4.21$ & $-0.97$ & \SetCell{bg=green!15}$182.80$ & \\
		& HVKG & $-13.39 \pm 6.57$ & \SetCell{bg=blue!10}$1.30 \mathrm{E}{-}3$ & $-3.80$ & $-0.87$ & \SetCell{bg=green!15}$74.14$ & \\
		{Ranger HPO \\($d=5, M=3$)} & qEHVI & $-3.16 \pm 4.45$ & $1.00 \mathrm{E}{-}1$ & $-1.70$ & $-0.39$ & \SetCell{bg=green!15}$86.07$ & $-5.88 \pm 3.76$\\
		& MOMF & $-0.89 \pm 3.89$ & \SetCell{bg=blue!10}$3.90 \mathrm{E}{-}4$ & $-4.34$ & $-1.00$ & \SetCell{bg=green!15}$561.02$ & \\
		& HVKG & $-2.80 \pm 5.67$ & $6.00 \mathrm{E}{-}2$ & $-2.00$ & $-0.46$ & \SetCell{bg=green!15}$109.60$ & \\
		{Robot navigation \\($d=26, M=2, Q=2$)} & qEHVI & $56.94 \pm 9.60$ & $2.60 \mathrm{E}{-}1$ & $-1.16$ & $-0.26$ & \SetCell{bg=green!15}$5.03$ & $54.07 \pm 6.23$\\
		& MOMF & $69.86 \pm 1.46 \mathrm{E}{-}14$ & \SetCell{bg=blue!10}$6.76 \mathrm{E}{-}10$ & $-11.33$ & $-2.53$ & \SetCell{bg=green!15}$22.59$ & \\
		& HVKG & $56.20 \pm 8.20$ & $2.60 \mathrm{E}{-}1$ & $-1.16$ & $-0.26$ & \SetCell{bg=green!15}$3.79$ & \\
		{Healthcare \\($d=3, M=2, Q=1$)} & qEHVI & $-50.49 \pm 34.60$ & \SetCell{bg=blue!10}$4.10 \mathrm{E}{-}2$ & $-3.26$ & $-0.73$ & \SetCell{bg=green!15}$56.76$ & $-79.14 \pm 32.43$ \\
		& MOMF & $-9.88 \pm 1.82 \mathrm{E}{-}15$ & \SetCell{bg=blue!10}$1.10 \mathrm{E}{-}8$ & $-9.55$ & $-2.14$ & \SetCell{bg=green!15}$701.06$ & \\
		& HVKG & $-69.12 \pm 28.38$ & $1.26 \mathrm{E}-1$ & $-1.60$ & $-0.35$ & \SetCell{bg=green!15}$14.50$ & \\
		{Constrained XGBoost HPO \\($d=13, M=2, Q=1$)} & qEHVI & $69.20 \pm 14.25$ & \SetCell{bg=blue!10}$2.10 \mathrm{E}{-}5$ & $-5.61$ & $-1.25$ & \SetCell{bg=green!15}$39.13$ & $42.12 \pm 19.87$\\
		& MOMF & $76.94 \pm 0.00$ & \SetCell{bg=blue!10}$2.27 \mathrm{E}{-}7$ & $-7.84$ & $-1.75$ & \SetCell{bg=green!15}$45.26$ & \\
		& HVKG & $68.26 \pm 13.23$ & \SetCell{bg=blue!10}$1.44 \mathrm{E}{-}5$ & $-5.78$ & $-1.29$ & \SetCell{bg=green!15}$38.29$ & \\
		{Constrained Ranger HPO \\($d=5, M=2, Q=1$)} & qEHVI & $1.87 \pm 10.35$ & \SetCell{bg=blue!10}$1.60 \mathrm{E}{-}4$ & $-4.75$ & $-1.09$ & \SetCell{bg=green!15}$690.43$ & $-11.04 \pm 9.43$\\
		& MOMF & $9.40 \pm 1.83 \mathrm{E}{-}15$ & \SetCell{bg=blue!10}$2.11 \mathrm{E}{-}8$ & $-9.45$ & $-2.17$ & \SetCell{bg=green!15}$217.45$ & \\
		& HVKG & $4.08 \pm 10.92$ & \SetCell{bg=blue!10}$1.03 \mathrm{E}{-}4$ & $-4.95$ & $-1.14$ & \SetCell{bg=green!15}$370.77$ &
	\end{tblr}
    }
\end{table}

\subsection{Additional Results}
\label{app:res_runtime}

\paragraph{Runtime}
All experiments were run on a single machine with an AMD Ryzen Threadripper 3960X CPU, an NVIDIA RTX~A6000 GPU, 128~GB memory, and Ubuntu~20.04 with Python~3.10, Botorch~0.15.1 and DoWhy~0.13. Figure~5 reports the end-to-end optimization runtime, including the time to obtain measurements. Across the eight benchmarks, qEHVI and MOMF have the smallest mean runtimes, while \our and HVKG are moderately slower but remain in the same order of magnitude. The higher runtime of \our and HVKG arises from computing the (C-)HVKG acquisition with Monte Carlo fantasies for each batch. In \our this cost is further increased by learning a CPM, and performing causal inference in the MF-CGP to construct the causal prior and in the C-HVKG acquisition.

\paragraph{Experiment statistics}
Table~\ref{tab:res_stats} reports the area-under-regret~(AUR) curve. For each problem and baseline, the AUR is obtained by integrating the instantaneous regret over the total budget using the trapezoidal rule. For each baseline, we report the paired $t$-test $p$-value, $t$-statistic, Cohen’s $d$, and the corresponding relative AUR gain of \our. Across the eight benchmarks, \our almost always reduces AUR, with particularly large and statistically significant gains. Overall, averaging the Gain~$\%$ column across problems, \our achieve improvement by $\boldsymbol{130\%}$ over qEHVI, $\boldsymbol{225\%}$ over MOMF, and $\boldsymbol{77\%}$ over HVKG.

\subsection{Multi-fidelity mobile robot navigation benchmark}
\label{app:tbot_surrogate}

\begin{figure*}[t]
\centering

\begin{minipage}{0.9\textwidth}
\centering
\small
\setlength{\tabcolsep}{4pt}

\captionof{table}{Robot navigation task surrogate model performance.}
\label{tab:robot_nav_surrogate_pref}

\begin{tblr}{
  cell{1}{1} = {r=2}{c},
  cell{1}{2} = {c=2}{c},
  cell{1}{5} = {c=2}{c},
  cell{3}{2} = {r},
  cell{3}{3} = {r},
  cell{3}{4} = {r},
  cell{3}{5} = {r},
  cell{3}{6} = {r},
  cell{4}{2} = {r},
  cell{4}{3} = {r},
  cell{4}{4} = {r},
  cell{4}{5} = {r},
  cell{4}{6} = {r},
  cell{5}{2} = {r},
  cell{5}{3} = {r},
  cell{5}{4} = {r},
  cell{5}{5} = {r},
  cell{5}{6} = {r},
  cell{6}{2} = {r},
  cell{6}{3} = {r},
  cell{6}{4} = {r},
  cell{6}{5} = {r},
  cell{6}{6} = {r},
  cell{7}{2} = {r},
  cell{7}{3} = {r},
  cell{7}{4} = {r},
  cell{7}{5} = {r},
  cell{7}{6} = {r},
  hline{1,7} = {-}{0.08em},
  hline{2} = {2-3,5-6}{},
  hline{3} = {-}{},
}
Model output & Train &  &  & Test & \\
 & $R^2$ & RMSE &  & $R^2$ & RMSE\\
Task execution time~(s) & $0.95$ & $31.95$ &  & $0.84$ & $58.79$\\
Total energy~(Wh) & $0.94$ & $0.09$ &  & $0.83$ & $0.17$\\
Distance Traveled~(m) & $0.97$ & $3.28$ &  & $0.73$ & $8.39$\\
Collision risk score & $0.99$ & $0.02$ &  & $0.98$ & $0.09$\\
Average & $0.97$ & $8.83$ &  & $0.81$ & $16.86$
\end{tblr}

\end{minipage}

\vspace{1em}

\setcounter{subfigure}{0}
\begin{minipage}{1\textwidth}
\centering

\subfloat[Combined Empirical Cumulative Distribution Function (ECDF) for surrogate predictions and real evaluations.]{
  \includegraphics[width=1\textwidth]{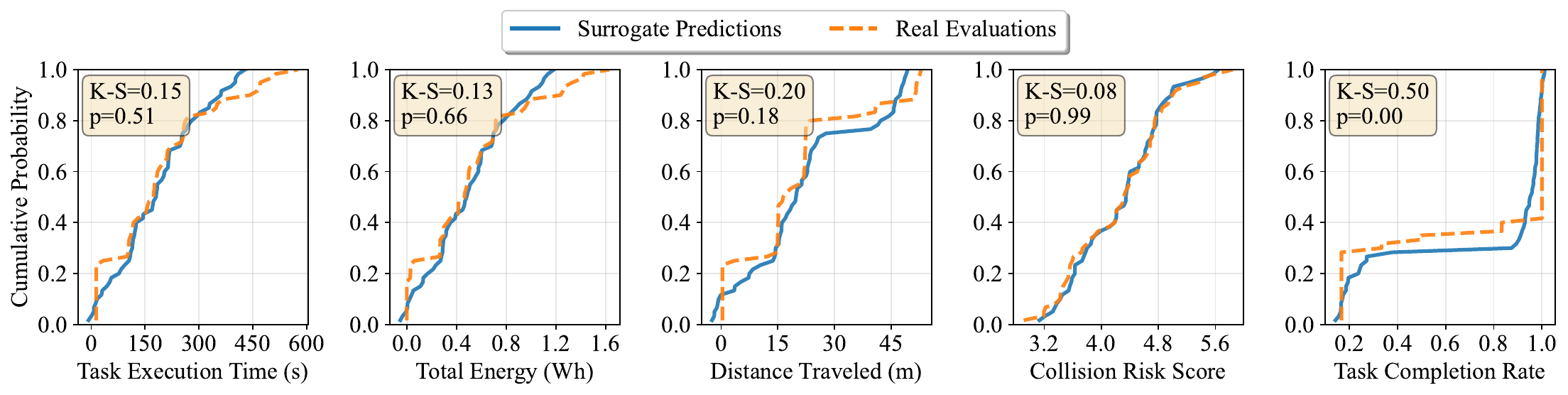}
  \label{fig:tbot_surrogate_ecdf_combined}
}
\hfill
\subfloat[Per-fidelity ECDF for surrogate predictions and real evaluations.]{
  \includegraphics[width=01\textwidth]{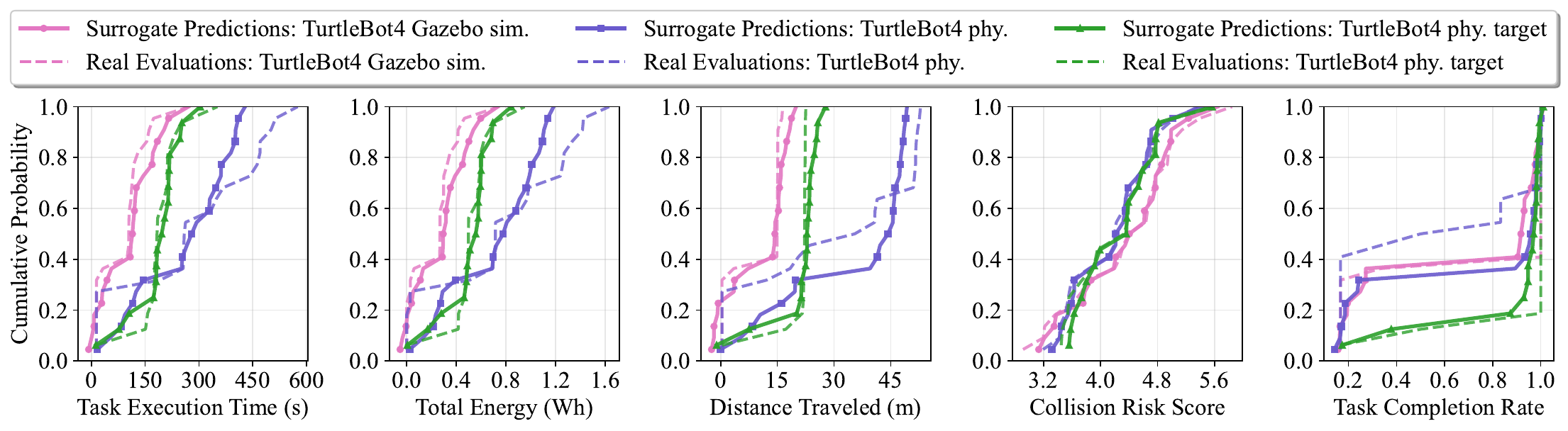}
  \label{fig:tbot_surrogate_ecdf_per_fidelity}
}

\caption{Surrogate model validation for the robot navigation task. Close alignment between predicted and real distributions, combined with consistent performance across simulation and physical robot fidelities, demonstrates the surrogate's effectiveness.}
\label{fig:tbot_surrogate_stats}

\end{minipage}

\end{figure*}

We construct a three-fidelity benchmark inspired from the setup of \citet{hossen2025cure}. Data are collected from: (i)~\emph{TurtleBot4 in Gazebo simulation} ($s=0.2$), (ii)~\emph{TurtleBot4 physical robot in an obstacle-free indoor environment} ($s=0.5$), and (iii)~\emph{TurtleBot4 physical robot in a complex indoor environment} (target, $s=1.0$). We collect $100$ interventional samples at each fidelity by uniformly sampling configurations within the parameter bounds listed in Table~\ref{tab:robotnav_configspcae}, yielding $300$ samples in total. For all trials, the robot is deployed under ROS~2, and evaluated on five task metrics: task completion rate, task execution time, total energy consumption, distance traveled, and collision-risk score. We randomly select $80\%$ of the data for training and reserve $20\%$ for testing.

\paragraph{Surrogate Model Training and Architecture}
We fit independent CatBoost~\citep{catboost2018} gradient-boosted regressors for each metric using multi-output regression. Fidelity is included as a numeric covariate to retain its ordinal structure. During inference, we additionally derive energy-per-meter as the ratio of total energy to distance traveled. All trained models are exported to ONNX format for cross-platform compatibility.

\paragraph{Distributional Alignment Across Fidelities}
Figure~\ref{fig:tbot_surrogate_ecdf_combined} reports the Empirical Cumulative Distribution Function~(ECDF) of surrogate predictions and real evaluations aggregated across fidelities. The Kolmogorov-Smirnov statistic (K-S and $p$-values) indicates close alignment for most metrics. The largest deviation is for task completion rate. Because this metric is bounded in $[0,1]$, its empirical distribution often concentrates near $0$ or $1$ across all fidelities. Such boundary concentration reduces variance and makes the K-S statistic more sensitive to small shifts between the surrogate and the real system, especially under the sample size of $100$ per fidelity. Figure~\ref{fig:tbot_surrogate_ecdf_per_fidelity} disaggregates ECDFs by fidelity and shows that the surrogate reproduces fidelity-dependent shifts, confirming it captures cross-fidelity trends and fidelity-specific variability.

\paragraph{Quantitative Performance}
Table~\ref{tab:robot_nav_surrogate_pref} reports train and test $R^2$ and Root Mean Square Error~(RMSE) for each metric. On the test set, the surrogate achieves an average $R^2$ of $0.81$. Energy and collision-risk have the smallest RMSE, distance traveled has a larger RMSE, and execution time has the largest RMSE. Combined with the distributional agreement in Fig.~\ref{fig:tbot_surrogate_stats}, the results indicate that the surrogate is sufficiently accurate to support the multi-fidelity mobile robot navigation benchmark, and any residual surrogate error propagates uniformly across all optimization methods including \our and the baselines.

\section*{Discovered DAGs and Configuration Space}

\begin{figure}[H]
  \centering
  \subfloat[Branin Currin]{\includegraphics[width=.3\columnwidth]{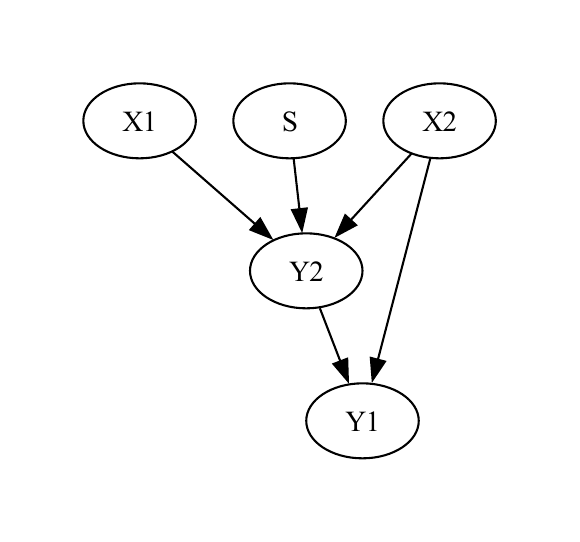}\label{fig:dag_bc}}
  \quad
  \subfloat[Park]{\includegraphics[width=0.4\columnwidth,]{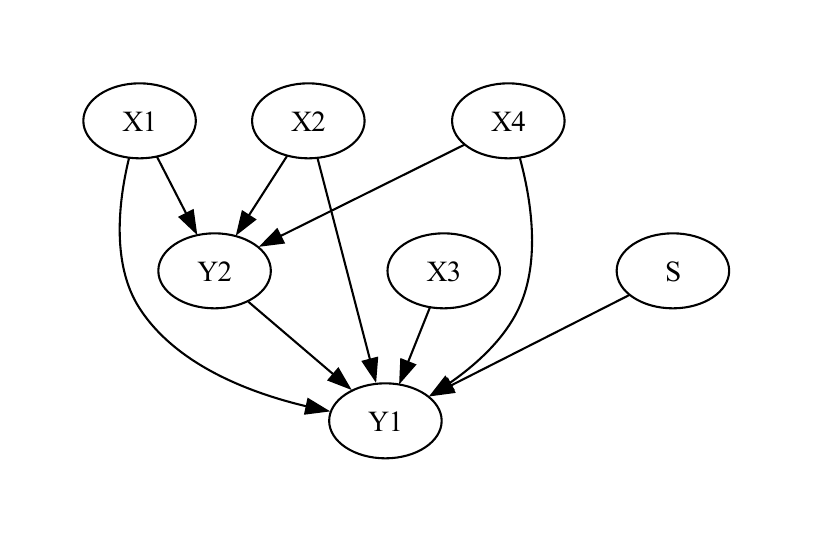}\label{fig:dag_park}}
  \caption{Discovered DAG using PC algorithm.}
\end{figure}

\begin{figure}[H]
  \begin{center}
    \centerline{\includegraphics[width=1.1\columnwidth]{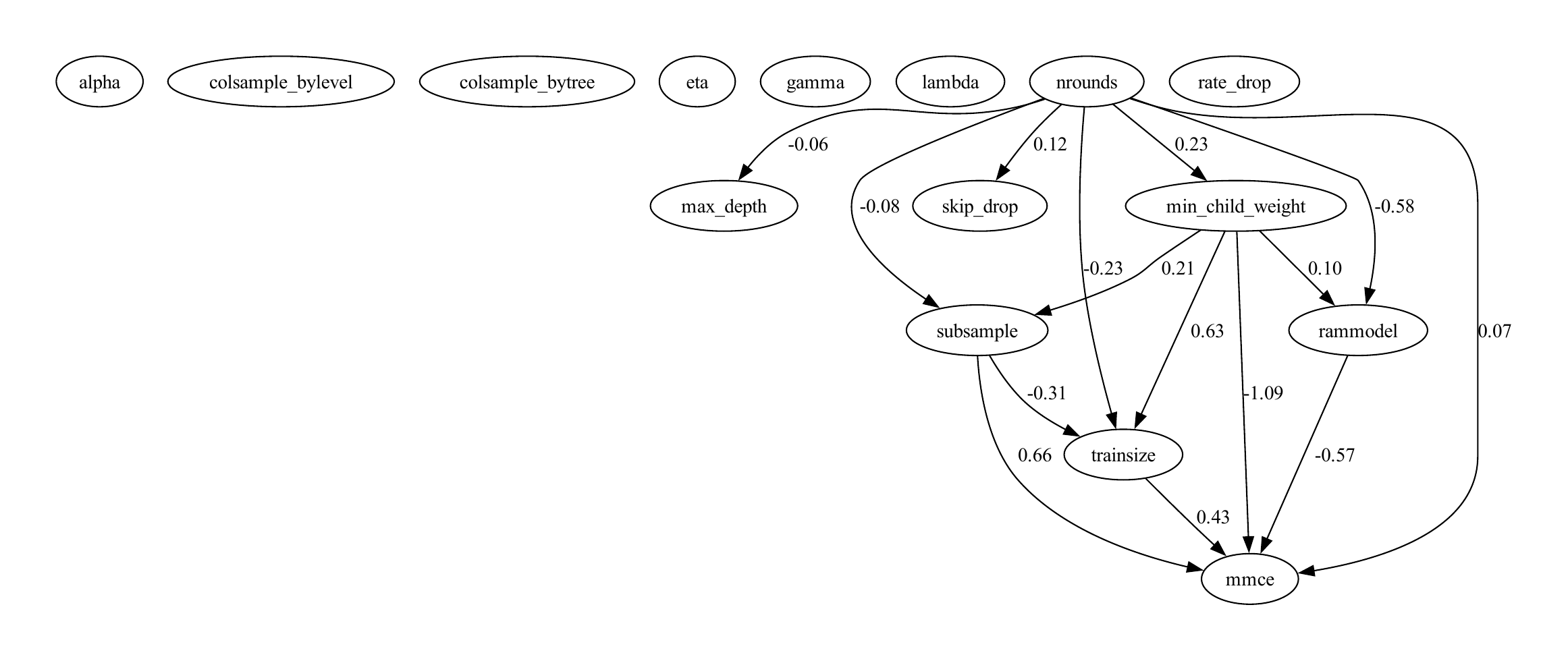}}
    \caption{Discovered DAG of XGBoost HPO problem using DirectLiNGAM.}
    \label{fig:dag_xgboost}
  \end{center}
\end{figure}

\begin{figure}[t]
  \begin{center}
  \vspace{-1.5em}
    \centerline{\includegraphics[width=0.75\columnwidth]{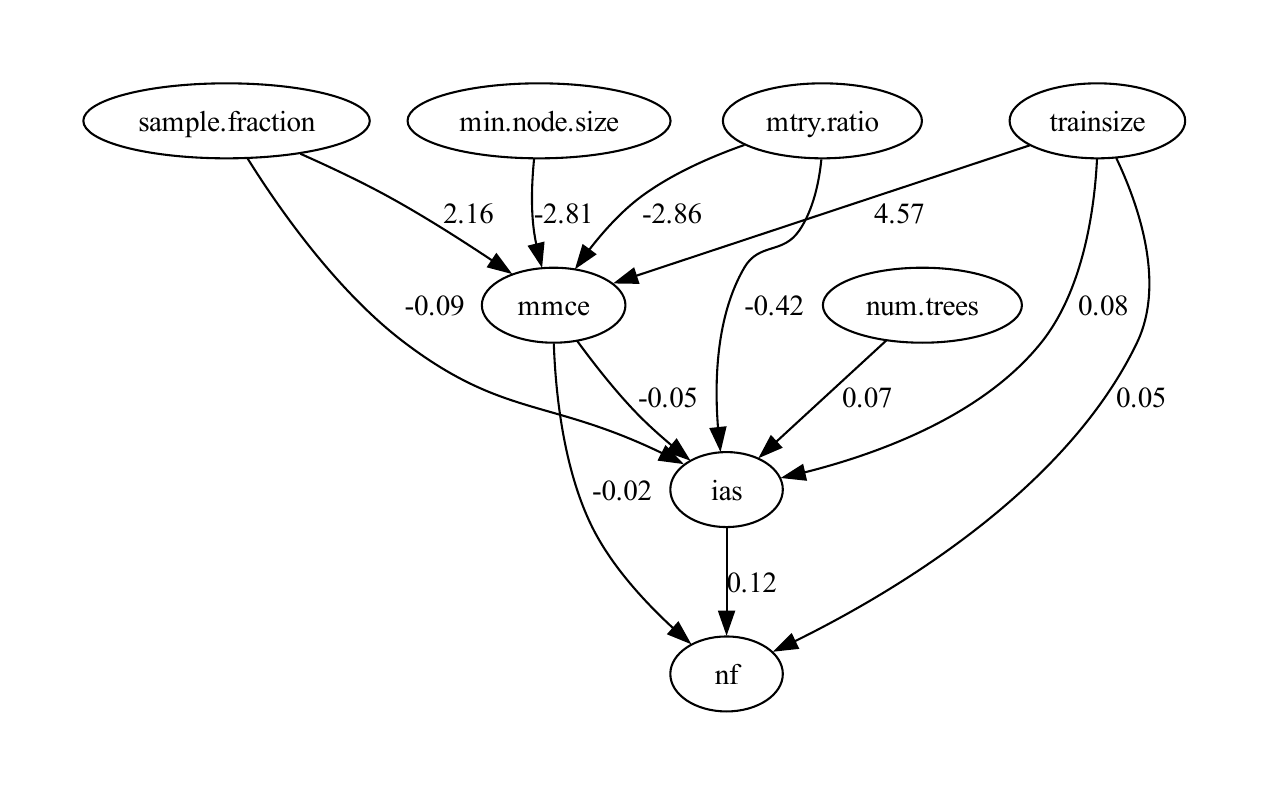}}
    \caption{Discovered DAG of Ranger HPO problem using DirectLiNGAM.}
    \label{fig:dag_ranger}
  \end{center}
\end{figure}

\begin{figure}[t]
  \begin{center}
    \centerline{\includegraphics[width=1.1\columnwidth]{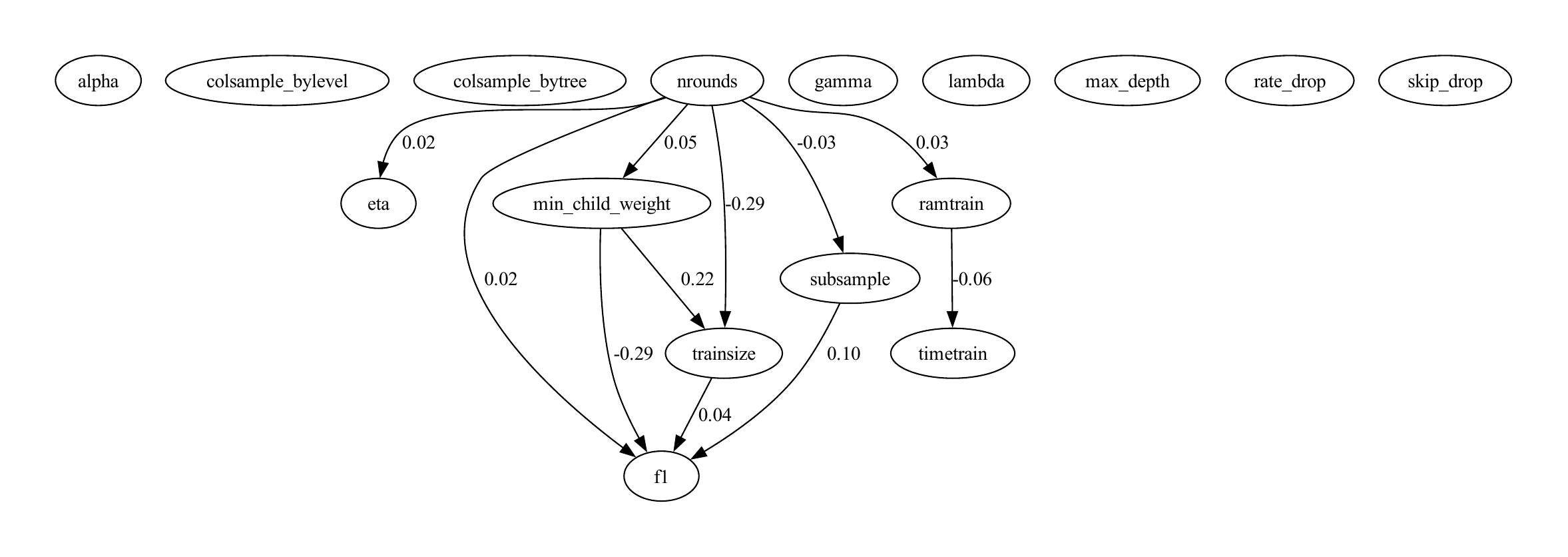}}
    \caption{Discovered DAG of Constrained XGBoost problem HPO using DirectLiNGAM.}
    \label{fig:dag_xgboostcons}
  \end{center}
\end{figure}

\begin{figure}[t]
  \begin{center}
    \centerline{\includegraphics[width=0.75\columnwidth]{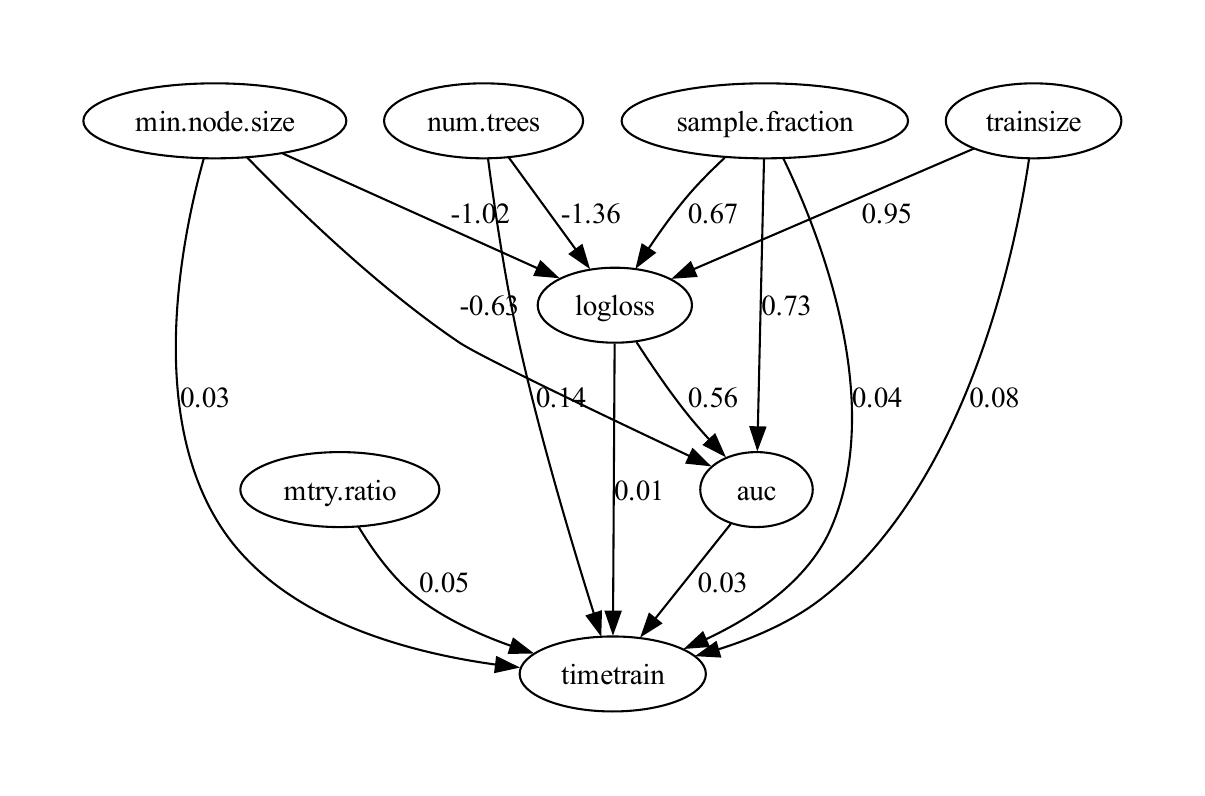}}
    \caption{Discovered DAG of Constrained XGBoost HPO problem using DirectLiNGAM.}
    \label{fig:dag_rangertcons}
  \end{center}
\end{figure}

\begin{figure}[t]
  \begin{center}
  \vspace{-2em}
    \centerline{\includegraphics[width=0.45\columnwidth]{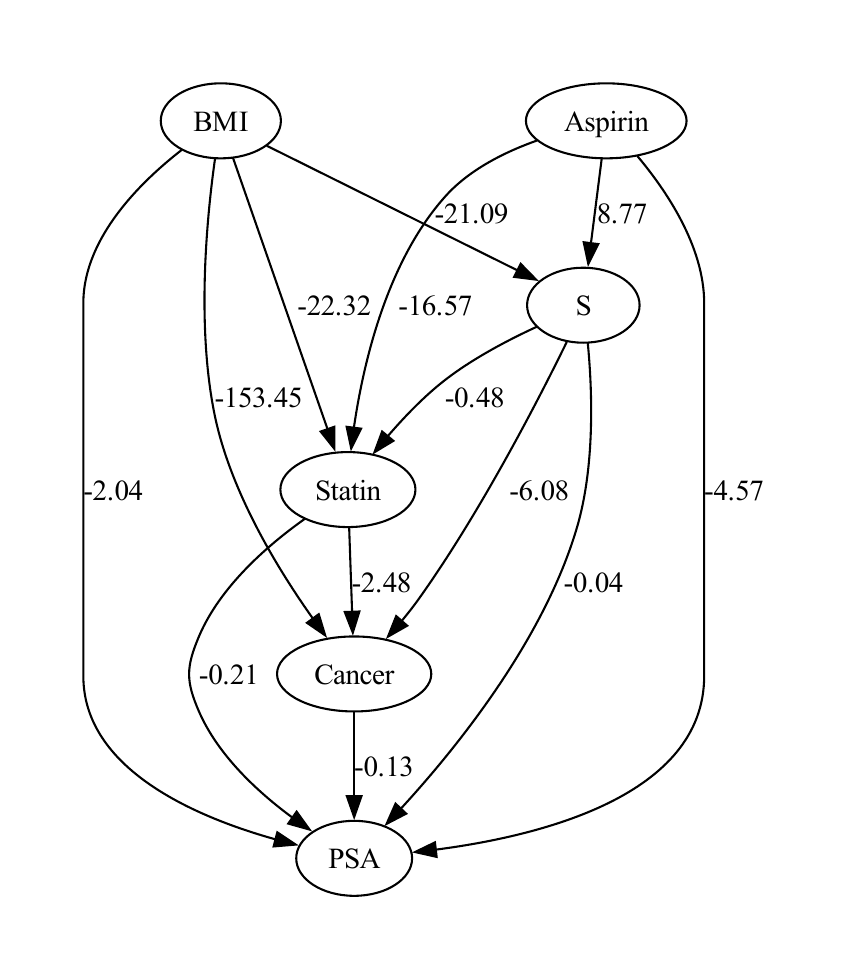}}
    \caption{Discovered DAG of Healthcare problem using DirectLiNGAM.}
    \label{fig:dag_health}
  \end{center}
\end{figure}

\begin{figure}[t]
  \begin{center}
   \vspace{-5em}
    \centerline{\includegraphics[width=1.1\columnwidth]{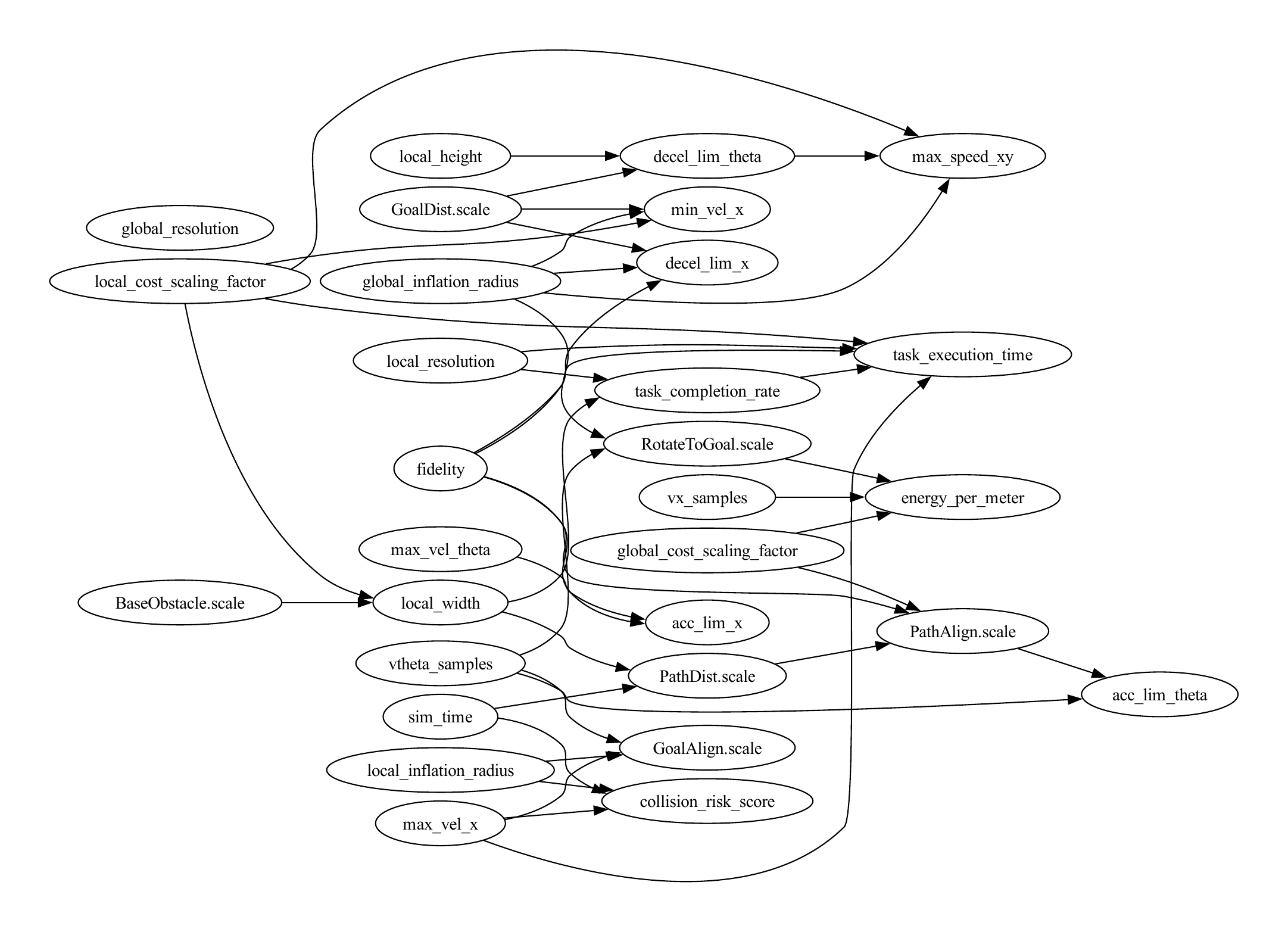}}
    \caption{Discovered DAG of Robot navigation task problem using PC algorithm.}
    \label{fig:dag_robotnav}
  \end{center}
\end{figure}

\begin{table}[t]
\centering
\caption{Configuration Spaces of All Problems}
\label{tab:all_configspaces}
\small
\begin{minipage}{0.48\textwidth}
\centering

\subcaption{Branin–Currin}
\label{tab:bc_configspace}
\begin{tblr}{
  colsep = 1em,
  hline{1-2,5} = {-}{},
}
Configuration options & Range \\
X1 & $[0, 1]$ \\
X2 & $[0, 1]$ \\
S  & $[0, 1]$
\end{tblr}

\vspace{1.2em}

\subcaption{Park}
\label{tab:park_configspace}
\begin{tblr}{
  colsep = 1em,
  hline{1-2,7} = {-}{},
}
Configuration options & Range \\
X1 & $[0, 1]$ \\
X2 & $[0, 1]$ \\
X3 & $[0, 1]$ \\
X4 & $[0, 1]$ \\
S  & $[0, 1]$
\end{tblr}

\vspace{1.2em}

\subcaption{XGBoost HPO (Scenario: iaml\_xgboost, Instance: $41146$)}
\label{tab:XGBoost_configspace}
\begin{tblr}{
  colsep = 1em,
  hline{1-2,16} = {-}{},
}
Configuration options & Range/Value \\
alpha & $[0.0005, 1.0]$\\
colsample\_bylevel & $[0.02, 1]$\\
colsample\_bytree & $[0.01, 1]$\\
eta & $[0.0005, 1]$\\
gamma & $[0.0005, 1]$\\
lambda & $[0.0005, 1]$\\
max\_depth & $[1, 15]$\\
min\_child\_weight & $[2.72, 149]$\\
nrounds & $[3, 2000]$\\
rate\_drop & $[0, 1]$\\
skip\_drop & $[0, 1]$\\
subsample & $[0.1, 1]$\\
trainsize & $[0.03, 1]$\\
booster & dart
\end{tblr}

\vspace{1.2em}

\subcaption{Ranger HPO (Scenario: iaml\_ranger Instance: $1489$)}
\label{tab:ranger_configspace}
\begin{tblr}{
  colsep = 1em,
  hline{1-2,10} = {-}{},
}
Configuration options & Range/Value \\
min.node.size & $[1, 100]$\\
mtry.ratio & $[0, 1]$\\
num.trees & $[1, 2000]$\\
sample.fraction & $[0.1, 1]$\\
trainsize & $[0.03, 1]$\\
replace & True\\
respect.unordered.factors & ignore\\
splitrule & gini\\
\end{tblr}

\end{minipage}
\hfill
\begin{minipage}{0.5\textwidth}
\centering

\subcaption{Healthcare}
\label{tab:health_configspcae}
\begin{tblr}{
  colsep = 1em,
  hline{1-2,5} = {-}{},
}
Configuration options & Range \\
BMI & $[20, 30]$\\
Aspirin & $[0, 1]$\\
S & $[0, 1]$
\end{tblr}

\vspace{1.5em}

\subcaption{Robot Navigation Task}
\label{tab:robotnav_configspcae}
\begin{tblr}{
  colsep = 1em,
  hline{1-2,30} = {-}{},
}
Configuration options & Range/Value \\
min\_vel\_x & $[-3, 0]$\\
max\_vel\_x & $[0.1, 0.5]$\\
max\_vel\_theta & $[0.5, 2.0]$\\
max\_speed\_xy & $[0, 0.1]$\\
acc\_lim\_x & $[1.5, 4]$\\
acc\_lim\_theta & $[1.5, 5]$\\
decel\_lim\_x & $[-4, -1]$\\
decel\_lim\_theta & $[-5, -1.5]$\\
vx\_samples & $[10, 40]$\\
vtheta\_samples & $[10, 40]$\\
sim\_time & $[1, 3]$\\
BaseObstacle.scale & $[0.01, 0.1]$\\
PathAlign.scale & $[10, 60]$\\
GoalAlign.scale & $[10, 60]$\\
PathDist.scale & $[10, 60]$\\
GoalDist.scale & $[10, 60]$\\
RotateToGoal.scale & $[10, 60]$\\
local\_width & $[2, 5]$\\
local\_height & $[2, 5]$\\
local\_resolution & $[0.04, 0.1]$\\
local\_inflation\_radius & $[0.3, 0.6]$\\
local\_cost\_scaling\_factor & $[2, 10]$\\
global\_resolution & $[0.04, 0.1]$\\
global\_inflation\_radius & $[0.3, 0.6]$\\
global\_cost\_scaling\_factor & $[2, 10]$\\
xy\_goal\_tolerance & $0.25$\\
yaw\_goal\_tolerance & $0.25$\\
fidelity & $0.2, 0.5, 1.0$
\end{tblr}
\end{minipage}
\end{table}

\newpage
